\def\eps{\ve}
\renewcommand{\epsilon}{\ve}
\def\ve{\varepsilon}
\newcommand{\pr}[2][]{\mathrm{Pr}\ifthenelse{\not\equal{}{#1}}{_{#1}}{}\!\left[#2\right]}
\providecommand{\poly}{\operatorname*{poly}}
\def \cA {{\cal A}}
\def \cS {{\cal S}}
\def \cW {{\cal W}}
\def \cX {{\cal X}}
\newcommand{\RR}{\mathbb{R}}
\newcommand{\ns}{n}
\newcommand{\dist}{\alpha}
\newcommand{\dims}{p}
\newcommand{\absv}[1]{\left|#1\right|}
\newcommand{\norm}[1]{\left\lVert#1\right\rVert_2}
\newcommand{\normz}[1]{\left\lVert#1\right\rVert_0}
\newcommand{\norminf}[1]{\left\lVert#1\right\rVert_{\infty}}
\def \Paren#1{{\left({#1}\right)}}
\newcommand{\probof}[1]{\Pr\Paren{#1}}
\newcommand{\expectation}[1]{\mathbb{E}\left[#1\right]}
  \theoremstyle{definition}
  \newtheorem{definition}{Definition}
  \theoremstyle{plain}
  \newtheorem{theorem}{Theorem}
  \newtheorem{lemma}{Lemma}
  \theoremstyle{remark}
\title{Wide Network Learning with Differential Privacy}
\author {
 Huanyu Zhang\\
 Facebook\\
 \tt{huanyuzhang@fb.com}
 \and
 Ilya Mironov\\
 Facebook\\
 \tt{mironov@gmail.com}
 \and
 Meisam Hejazinia\\
 Facebook\\
 \tt{mnia@fb.com}
}
\begin{document}
\maketitle
\begin{abstract}
Despite intense interest and considerable effort, the current generation of neural networks suffers a significant loss of accuracy under most practically relevant privacy training regimes. One particularly challenging class of neural networks are the wide ones, such as those deployed for NLP typeahead prediction or recommender systems. 

Observing that these models share something in common---an embedding layer that reduces the dimensionality of the input---we focus on developing a general approach towards training these models that takes advantage of the \emph{sparsity} of the gradients. 
More abstractly, we address the problem of differentially private empirical risk minimization (ERM) for models that admit sparse gradients.

We demonstrate that for non-convex ERM problems, the loss is logarithmically dependent on the number of parameters, in contrast with polynomial dependence for the general case. Following the same intuition, we propose a novel algorithm for privately training neural networks. Finally, we provide an empirical study of a DP wide neural network on a real-world dataset, which has been rarely explored in the previous work.


\end{abstract}

\section{Introduction}

Deep learning models are often trained on datasets that contain sensitive information, such as location, purchase history, or medical records. There is mounting evidence that, in the absence of specific measures to the contrary, deep learning models may memorize and subsequently leak some of their training samples~\cite{fredrikson2015model,carlini2020extracting}.

Differential privacy (DP)~\cite{DworkMNS06} was proposed as a mathematically rigorous definition of privacy, and has since become the gold standard in privacy-preserving machine learning,
with applications across multiple domains. The notable examples include large technology companies, such as Google, Apple, and Microsoft that rely on differential privacy for privacy-preserving telemetry~\cite{ErlingssonPK14, AppleDP17, DingKY17} and the US Census Bureau, which committed to using differential privacy for its 2020 Census data products.

In this paper, we study the problem of deep learning with differential privacy. More broadly, we consider the problem of DP empirical risk minimization (ERM),
which provides a framework of unifying various machine learning models, including regression, support vector machine (SVM), and neural network. The problem can be formulated as follows:
\vspace{-10pt}
\paragraph{ERM:} Given the dataset $D = \{d_1, d_2,\ldots,d_\ns\}$ and a loss function $L$, the goal is to 
\vspace{-5pt}
\[
\text{minimize}~L(w;D) = \frac1{\ns}\sum_{i=1}^{\ns} \ell(w;d_i)~\text{over}~w\in W,
\]
where the map $\ell$ defines a loss function $\ell(\cdot;d)$ over a parameter space $W$ for each data point $d$. 

Our objective is to design a mechanism that satisfies $(\eps,\delta)$-differential privacy (formally defined in Definition~\ref{def:dp}) while minimizing the accuracy loss measured as the empirical risk. This problem is termed DP ERM, which is of vital importance in both academia and industry. 

This problem has been considered in a line of work starting with Chaudhuri and Hsu~\cite{ChaudhuriH11,BassilyST14,kasiviswanathanJ16, JainT14}. The most general approach to the problem is based on the DP stochastic gradient descent (DP-SGD)~\cite{BassilyST14,AbadiCGMMTZ16}.
For convex and Lipschitz loss functions, 
Bassily et al.\ shows that DP-SGD achieves the risk of  $\widetilde{\Theta}\Paren{\frac{\poly(\dims)}{\ns}}$.
For non-convex functions, subsequent work demonstrates that $\expectation{\norm{\nabla L\Paren{w^{priv};D }}} \le \widetilde{O}\Paren{\frac{\poly(\dims)}{\sqrt{\ns}}}$~\cite{zhang2017efficient,wang2018differentially,WangJEG19}. In both cases, the risk depends on the dimension $\dims$, in contrast to the non-private setting. This dependency becomes a significant source of accuracy loss for private algorithms, especially when the input dimensionality is large.


Other papers have studied the problem of \emph{sparse} DP ERM, which optimizes over sparse \emph{outputs} ($w$ in our notation)~\cite{HardtT10, TalwarTZ15,CaiWZ19}. Under this additional constraint, the loss decreases from $\widetilde{O}\Paren{\frac{\poly(\dims)}{\ns}}$ to $\widetilde{O}\Paren{\frac{\poly(\log \dims)}{\ns}}$. However, this assumption does not hold in many high-dimensional models. For example, in word embedding, the network parameters are evenly important, and artificially reducing the network's dimensionality (such as the size of the input vocabulary) will negatively affect its accuracy.


This work pursues the orthogonal direction of decreasing the privacy cost of ERM. Motivated by applications to wide neural networks, which have a wide embedding layer that reduces the dimensionality of the input, we consider the setting where the \emph{input} is inherently sparse. This scenario is common in machine learning tasks. For example, in wide neural networks for language models and recommendation systems~\cite{mitsoulis2019simple,hejazinia2019deep}, the input data is usually extremely sparse, with only a small fraction of parameters ``active'' in each update. Thus, the input sparsity usually produces a sparse gradient, which holds for a broad class of problems, as shown in Section~\ref{sec:erm_2}.

The question we address in this work is the following: \textbf{can we utilize the input sparsity (leading to the gradient sparsity) to improve the accuracy of DP deep learning, or broadly, DP ERM?}

In this paper, we provide a positive answer to the above question. Our contributions can be summarized as follows:

\begin{enumerate}
    \item We show that for non-convex ERM with gradient sparsity, the loss is ${O}\Paren{\frac{\poly(\log \dims)}{\ns^{\frac14}}}$, as compared with ${O}\Paren{\frac{\poly( \dims)}{\sqrt{\ns}}}$ for the general case (Section~\ref{sec:dp-erm}).
    
    \item  Based on the theoretical analysis, we propose a novel algorithm that leverages the inherent gradient sparsity in wide neural networks (Section~\ref{sec:nn}).
    
    \item We provide an empirical study of DP training of a wide neural network on a real dataset. The experimental results suggest that our method can achieve a much better utility compared to the standard DP-SGD (Section~\ref{sec:exp}).
    
    \item We complement theoretical analysis of the privacy guarantees with \emph{empirical estimates} of the algorithm's privacy loss via memorization metrics. Significantly, we demonstrate that these metrics are sufficiently sensitive to inform the choice of parameters for privacy-preserving algorithms (Section~\ref{sec:exp}).
\end{enumerate}

\section{Related work}
\subsection{DP neural networks}
The problem of training DP deep learning models was initially addressed by
\cite{ShokriS15}, followed by~\cite{AbadiCGMMTZ16} who proposed DP-SGD to train deep neural networks in a centralized setting. Specifically, Abadi et al.\ clipped each gradient in order to bound the influence of each sample and introduced the moments accountant to track privacy loss. This method has generated substantial interest, and follow-up research, focusing on improving the architecture and applying this approach to different data types, such as images and texts, see, e.g.,~\cite{abay2018privacy, acs2018differentially, beaulieu2019privacy, chen2018differentially, mcmahan2018general, mcmahan2017learning, thakkar2019differentially, wang2019subsampled, papernot2018scalable, RamaswamyTMAMB20, vu2019dpugc,popov2018distributed,lee2020differentially}.

However, despite tremendous interest in privacy-preserving training for deep neural networks, there are only few works that explicitly considers wide neural networks. Perhaps the two most relevant works are~\cite{ShokriS15} and~\cite{zhou2020bypassing}. In particular, \cite{ShokriS15} apply the sparse vector technique to select the subset of the gradients. Their approach is different from ours in the following ways.
First, their primary motivation is the communication cost, not the model accuracy. In contrast, our objective is to utilize the sparsity in wide neural networks and improve the model accuracy. 
Second, Shorki and Shmatikov consider only the sparse vector technique for private selection whereas we allow any DP selection technique. Our experimental results and prior work~\cite{LyuS17} show that the other selection rules (e.g., the exponential mechanism) may outperform the sparse vector techniques. 

\cite{zhou2020bypassing} consider a setting superficially similar to ours, a non-convex DP ERM over large domain and sparse gradients. However, Zhou et al.\ additionally require access to public samples, which we don't assume. Furthermore, Zhou et al.'s theoretical analysis requires roughly the same amount of public and private data, which is hard to get in practice. In addition, their empirical analysis considers training a deep neural network on the MNIST dataset, while we study training a wide neural network on the Brown News corpus, which is a more challenging task.

DP wide neural networks are explicitly or implicitly used in many other previous work, e.g., for word embedding~\cite{vu2019dpugc, RamaswamyTMAMB20,mcmahan2017learning}. However, these studies either assume the embedding is already available (i.e., pre-trained on a large public dataset), or by training a model with DP-SGD or its modified versions, not leveraging sparsity.





\subsection{DP ERM}
DP ERM has been subject of many studies~\cite{ChaudhuriH11,BassilyST14,JainT14, kasiviswanathanJ16, zhang2017efficient, wang2018differentially}.
Particularly,~\cite{BassilyST14} shows that the maximum risk is in the order of  $\widetilde{\Theta}\Paren{\frac{\sqrt\dims}{\ns\eps}}$, for convex and Lipschitz loss functions, and $W$ contained in the unit $\ell_2$ ball. It also shows that this bound cannot be improved in general, even for the squared loss function. For non-convex functions,~\cite{zhang2017efficient, wang2019differentially,  WangJEG19} show that $\expectation{\norm{\nabla L\Paren{w^{priv};D }}} \le \widetilde{O}\Paren{\frac{\dims^{\frac14}}{\sqrt{\ns\eps}}}$.

Other studies~\cite{HardtT10, TalwarTZ15,zhang2017efficient, CaiWZ19} have evaluated the problem of \emph{sparse} DP ERM, with sparse $w$ assumed.  Under this additional constraint, the loss is in the order of $\widetilde{O}\Paren{\frac{\poly(\log \dims)}{\ns}}$ instead of $\widetilde{O}\Paren{\frac{\poly(\dims)}{\ns}}$. 
\subsection{DP selection}
Our algorithm also depends on DP selection as a middle step, of which the goal is to select top $k$ items out of a set of $\dims$ items. There are three classic algorithms for this problem: exponential mechanism~\cite{McSherryT07}, the sparse vector algorithm~\cite{DworkNRRV09, DworkR14}, and report noisy max~\cite{DworkR14}. In~\cite{SteinkeU17b}, the optimal bound has been established for the database setting.~\cite{durfee2019practical} proposes an algorithm which can be unaware of the domain, but may not always output $k$ components. 

\vspace{-10pt}
\section{Preliminaries}

\subsection{Privacy preliminaries}

A \emph{dataset} $X = (X_1,\dots,X_n) \in \cX^n$ is a collection of points from some universe $\cX$. We say that two datasets $X$ and $X^{\prime}$ are neighboring, denoted as $X \sim X^{\prime}$ if they differ in exactly one point. We first provide a formal definition of differential privacy.

\begin{definition}[Differential Privacy (DP)~\cite{DworkMNS06}] 
\label{def:dp}
A randomized algorithm $\cA: \cX^n \rightarrow \cS$ satisfies \emph{$(\eps,\delta)$-differential privacy ($(\eps,\delta)$-DP)} if for every pair of neighboring datasets $X, X' \in \cX^n$, and any event $S \subseteq \cS$, 
\[
\probof {\cA(X) \in S} \leq e^{\eps} \probof{\cA(X^{\prime}) \in S} + \delta.
\]
\end{definition}

 Then we introduce two important properties of differential privacy. The first is the ``sampling property'', which reveals that the privacy level can be boosted by sampling.
 \begin{lemma}(Privacy amplification via sampling, Theorem 9 in~\cite{BalleBG18})
 \label{lem:dp_sample}
 Over a domain of datasets $\cX^{\ns}$, if an algorithm $\cA$ is $(\eps,\delta)$-DP, where $\eps \le 1$, then for any dataset $X \in \cX^{\ns}$, executing $\cA$ on uniformly random $\gamma \ns$ entries of $\cX$ ensures $\Paren{\gamma\eps,\gamma \delta}$-DP.
 \end{lemma}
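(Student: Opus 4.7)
The plan is to analyze the subsampled mechanism $M(X) := \cA(X_T)$ directly, where $T \subseteq [\ns]$ is a uniformly random subset of size $\gamma \ns$. Fix two neighboring datasets $X \sim X'$ that differ at a single index $i$, and fix any measurable event $S \subseteq \cS$. The key step is to decompose the output distribution by conditioning on whether $i$ is included in $T$.

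By uniformity, $\Pr[i \in T] = \gamma$. Conditional on $i \notin T$, the tuples $X_T$ and $X'_T$ are identically distributed, since they agree on every sampled coordinate; write $p$ for the common conditional probability that $\cA$ outputs a point in $S$. Conditional on $i \in T$, the tuples $X_T$ and $X'_T$ differ in exactly one coordinate, so the $(\eps,\delta)$-DP guarantee of $\cA$, averaged over the random choice of the remaining sampled indices, gives $\Pr[\cA(X_T) \in S \mid i \in T] \le e^{\eps} q + \delta$, where $q := \Pr[\cA(X'_T) \in S \mid i \in T]$. Combining the two cases,
\[
\Pr[M(X) \in S] \le (1-\gamma) p + \gamma e^{\eps} q + \gamma \delta, \qquad \Pr[M(X') \in S] = (1-\gamma) p + \gamma q.
\]

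It remains to establish the deterministic inequality
\[
(1-\gamma) p + \gamma e^{\eps} q \le e^{\gamma\eps}\bigl[(1-\gamma) p + \gamma q\bigr],
\]
and this is the main obstacle. The naive convexity bound on $t \mapsto e^{t}$ actually goes the wrong direction, giving only $e^{\gamma\eps} \le (1-\gamma) + \gamma e^{\eps}$; along this route the cleanest inequality one obtains is the well-known amplified parameter $\eps' = \log\bigl(1 + \gamma(e^{\eps}-1)\bigr)$. The hypothesis $\eps \le 1$ is then used to convert $\eps'$ into the claimed linear bound: a second-order Taylor estimate yields $e^{\eps}-1 \le \eps(1+\eps) \le 2\eps$, so $\log(1+\gamma(e^{\eps}-1)) \le \gamma\eps(1+O(\eps))$, recovering the stated bound up to an absolute constant. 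The reference~\cite{BalleBG18} squeezes out the tighter constant by a more refined analysis of the privacy loss random variable under the subsampling distribution.
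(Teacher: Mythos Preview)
The paper does not prove this lemma; it is quoted verbatim as a known result (Theorem~9 of~\cite{BalleBG18}) and used as a black box in the proofs of Theorems~\ref{thm:ermpri} and~\ref{the:neural}. So there is no proof in the paper to compare against.

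Your argument is the standard conditioning argument for amplification by subsampling and is correct up to the point where you obtain the amplified parameter $\eps' = \log\bigl(1+\gamma(e^{\eps}-1)\bigr)$. Your honesty about the constant is well placed: in fact the inequality you need, $\log\bigl(1+\gamma(e^{\eps}-1)\bigr) \le \gamma\eps$, is \emph{false} for $\gamma \in (0,1)$ and $\eps>0$. By convexity of $t\mapsto e^{t\eps}$ on $[0,1]$ one has $e^{\gamma\eps} \le (1-\gamma) + \gamma e^{\eps} = 1+\gamma(e^{\eps}-1)$, so the true amplified parameter always satisfies $\eps' \ge \gamma\eps$. The lemma as stated in the paper is therefore a slight over-simplification of the cited result; the correct bound under $\eps\le 1$ is $\eps' \le \gamma(e^{\eps}-1) \le (e-1)\gamma\eps < 2\gamma\eps$, which is what your Taylor estimate delivers. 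This discrepancy is harmless for the paper's purposes because every downstream use of Lemma~\ref{lem:dp_sample} is inside a $\widetilde{O}(\cdot)$ bound, but you should not expect any ``more refined analysis'' to recover the exact constant $\gamma\eps$---it simply is not true.
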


Another critically important property of differential privacy is that it can be composed adaptively. By adaptive composition we mean a sequence of algorithms $\cA_1(X),\dots,\cA_t(X)$ where the algorithm $\cA_t(X)$ may also depend on the outcomes of the algorithms $\cA_1(X),\dots,\cA_{t-1}(X)$.
\begin{lemma}\label{lem:advcomp}(Strong composition theorem, Theorem 3.2 in~\cite{KairouzOV15})
For any $\eps>0$, $\delta \in [0,1]$, and $\tilde{\delta} \in [0,1]$, the class of $(\eps,\delta)$-DP mechanisms satisfies $(\tilde{\eps}_{\tilde{\delta}}, k\delta+\tilde{\delta})$-DP under $k$-fold adaptive composition, for
$$\tilde{\eps}_{\tilde{\delta}}  = k\eps(e^\eps-1)+\eps \sqrt{2k\log\frac{1}{\tilde{\delta}}}.$$
\end{lemma}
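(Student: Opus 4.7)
The plan is to prove the bound via the \emph{privacy loss random variable} (PLRV) framework. For a pair of neighboring datasets $X \sim X'$ and outputs $o_1, \ldots, o_k$ of the $k$-fold adaptive composition applied to $X$, define
\[
\cL_i \;=\; \log \frac{\probof{\cA_i(X) = o_i \mid o_{<i}}}{\probof{\cA_i(X') = o_i \mid o_{<i}}},
\qquad
\cL \;:=\; \sum_{i=1}^k \cL_i.
\]
A standard ``DP-to-PLRV'' conversion (Kasiviswanathan--Smith) says that to prove $(\tilde\eps,\tilde\delta_{\mathrm{tot}})$-DP of the composed mechanism it suffices to show $\probof{\cL > \tilde\eps} \le \tilde\delta_{\mathrm{tot}}$ under the output distribution induced by $X$, uniformly over neighboring $X \sim X'$. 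So the task reduces to a one-sided tail bound on $\cL$.

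Next I would control each $\cL_i$ individually. The $(\eps,\delta)$-DP guarantee of $\cA_i$ gives, via the same conversion, that conditional on the history $o_{<i}$ there is an event $B_i$ of probability at most $\delta$ off which $|\cL_i| \le \eps$; a union bound over $i = 1,\ldots, k$ absorbs $k\delta$ into the final $\delta$-budget, so for the remainder I replace each $\cL_i$ by its truncation to $[-\eps,\eps]$ and treat it as bounded. On the good event I would also establish the conditional mean bound $\expectation{\cL_i \mid o_{<i}} \le \eps(e^\eps - 1)$, using $\log x \le x-1$ together with the pointwise ratio bound $P_i/Q_i \le e^\eps$.

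With this in hand, the martingale differences $Z_i := \cL_i - \expectation{\cL_i \mid o_{<i}}$ have conditional range of length at most $2\eps$ and zero conditional mean, so Azuma--Hoeffding (in its tight $\exp(-2t^2/\sum (b_i-a_i)^2)$ form) yields
\[
\probof{\sum_{i=1}^k Z_i \;\ge\; \eps\sqrt{2k \log(1/\tilde\delta)}} \;\le\; \tilde\delta.
\]
Combined with $\sum_i \expectation{\cL_i \mid o_{<i}} \le k\eps(e^\eps - 1)$ this gives $\cL \le \tilde\eps_{\tilde\delta}$ except with probability $\tilde\delta$, and folding in the $k\delta$ already spent on the events $B_i$ produces the claimed $(\tilde\eps_{\tilde\delta},\, k\delta + \tilde\delta)$-DP guarantee. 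The main obstacle is the conditional-mean bound $\expectation{\cL_i \mid o_{<i}} \le \eps(e^\eps - 1)$: the pointwise constraint $|\cL_i|\le \eps$ alone would only force $|\expectation{\cL_i}| \le \eps$, producing a useless linear-in-$k$ leading term. One must exploit that $\cL_i$ is a log-likelihood ratio, so it has exponential mean $1$ under the $X'$-distribution, to push the mean down to $O(\eps^2)$; this quadratic-in-$\eps$ saving is what produces the $k\eps(e^\eps-1)$ summand and pairs with the $\sqrt{k}$ Azuma deviation to yield the advertised bound.
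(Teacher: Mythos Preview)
The paper does not supply its own proof of this lemma: it is stated as a citation (``Theorem~3.2 in~\cite{KairouzOV15}'') and used as a black box in the proofs of Theorems~\ref{thm:ermpri} and~\ref{the:neural}. There is therefore nothing in the paper to compare your argument against.

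That said, your sketch is the standard Dwork--Rothblum--Vadhan route that underlies the cited result: reduce to a tail bound on the cumulative privacy-loss random variable, peel off a bad event of mass $k\delta$ on which some $|\cL_i|$ may exceed $\eps$, bound each conditional mean by the KL-type estimate $\expectation{\cL_i\mid o_{<i}}\le \eps(e^\eps-1)$, and finish with Azuma--Hoeffding on the centered increments of range $2\eps$. The constants line up with the stated bound, and your remark that the $O(\eps^2)$ mean bound (rather than the trivial $O(\eps)$) is what produces the $k\eps(e^\eps-1)$ term is exactly the point of the argument. One small caveat: the ``DP-to-PLRV'' conversion you invoke requires a bit of care in the $(\eps,\delta)$ case---strictly speaking one works with a coupling or a ``simulator'' that isolates the bad set before defining $\cL_i$, rather than literally truncating after the fact---but this is a routine technicality handled in the references and does not affect the structure or constants of your argument.
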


 In their original paper,~\cite{DworkMNS06} provides a famous scheme for differential privacy, known as the Gaussian mechanism. This method adds Gaussian noise to a non-private output in order to make it private. We first define the sensitivity, and then state their result. Roughly speaking, the sensitivity measures the maximum difference between the outputs of the algorithm on two neighboring datasets.

 \begin{definition}
 \label{def:sensi}
 The sensitivity of a non-private algorithm $f: \cX^\ns \rightarrow S$ in $\ell_z$ norm is 
 $$s_{z}(f) = \max_{X,X^\prime \text{are neighboring}} |f(X)-f(X^\prime)|_z.$$
 \end{definition}

 \begin{lemma}[Gaussian mechanism~\cite{DworkMNS06}]
 \label{lem:gaussian}
 For any $0 \le \eps \le 1$, $0 \le \delta \le 1$, and $f: \cX^\ns \rightarrow \mathbb{R}^\dims$, $\cA(X) = f(X)+ N (0, \sigma^2 I_{\dims})$ satisfies $(\eps,\delta)$-DP, where $\sigma^2 = \frac{2\log(1.25/\delta)\cdot s_{2}(f)^2}{\eps^2}$.
 \end{lemma}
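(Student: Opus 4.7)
The plan is to follow the classical tight analysis of the Gaussian mechanism: first reduce the $\dims$-dimensional problem to a one-dimensional question by exploiting the spherical symmetry of isotropic Gaussian noise, then bound the univariate privacy loss via a Gaussian tail inequality, and finally convert that tail bound into the $(\eps,\delta)$-DP guarantee.

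Fix two neighboring datasets $X, X'$ and let $\Delta = f(X) - f(X') \in \mathbb{R}^\dims$, which satisfies $\norm{\Delta} \le s_2(f)$ by Definition~\ref{def:sensi}. Since $N(0,\sigma^2 I_\dims)$ is invariant under orthogonal rotations, I would rotate coordinates so that $\Delta$ aligns with the first axis. In that frame, the densities of $\cA(X)$ and $\cA(X')$ agree on the last $\dims - 1$ coordinates, which therefore cancel in the log density ratio. Hence it suffices to establish $(\eps,\delta)$-indistinguishability between the univariate distributions $N(0,\sigma^2)$ and $N(\mu,\sigma^2)$ with $\mu := \norm{\Delta} \in [0, s_2(f)]$.

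For the univariate step, let $Y \sim N(0,\sigma^2)$ and let $p_0, p_\mu$ be the two densities. A direct calculation gives
\[
Z(Y) \;:=\; \ln\frac{p_0(Y)}{p_\mu(Y)} \;=\; \frac{\mu^2 - 2\mu Y}{2\sigma^2},
\]
which is itself Gaussian. The target bound $\Pr\!\left[Z(Y) > \eps\right] \le \delta$ therefore reduces to a Gaussian tail probability $\Pr\!\left[\normal(0,1) > t\right]$ at $t = \eps\sigma/\mu - \mu/(2\sigma)$. Plugging in $\sigma = s_2(f)\sqrt{2\ln(1.25/\delta)}/\eps$ and using $\mu \le s_2(f)$ gives $t \ge \sqrt{2\ln(1.25/\delta)} - O(\eps)$, so the Mills-ratio inequality $\Pr\!\left[\normal(0,1) > t\right] \le (t\sqrt{2\pi})^{-1} e^{-t^2/2}$ bounds the tail by $\delta$. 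The DP definition then follows from the standard identity
\[
\probof{\cA(X)\in S} \;\le\; e^\eps\,\probof{\cA(X')\in S} + \Pr\!\left[Z(Y) > \eps\right] \;\le\; e^\eps\,\probof{\cA(X')\in S} + \delta
\]
for any measurable $S \subseteq \mathbb{R}^\dims$.

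The main obstacle is quantitative bookkeeping: the Mills-ratio bound leaves a polynomial prefactor $1/(t\sqrt{2\pi})$, and the correction term $-\mu^2/(2\sigma^2)$ inside $t$ produces additional lower-order slack, and both must be absorbed into the specific constant $1.25$ inside the logarithm. Making this work uses the hypothesis $\eps \le 1$ to control the $-\mu^2/(2\sigma^2)$ slack and verifies that $t \ge 1$ in the relevant regime so the $1/\sqrt{2\pi}$ prefactor is harmless.
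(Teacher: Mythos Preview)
The paper does not prove Lemma~\ref{lem:gaussian}; it is simply stated as a known result with a citation to~\cite{DworkMNS06}. Your proposal reproduces the classical proof of the Gaussian mechanism (as in, e.g., Appendix~A of the Dwork--Roth monograph): reduce to one dimension by rotational invariance, compute the privacy-loss random variable explicitly, and bound its upper tail by the Mills-ratio inequality, using $\eps\le 1$ to absorb the lower-order slack into the constant $1.25$. This is correct and standard; there is nothing in the paper to compare it against.
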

We recall another foundational DP algorithm, Algorithm~\ref{alg:sv}, known as the sparse vector technique~\cite{DworkNRRV09,DworkR14}.  
Intuitively, the algorithm privately selects the largest coordinates (in absolute value) of the input vector $u$, and outputs their noisy versions.


\begin{algorithm}[H]

\KwIn{private vector $u(X) \in \RR^{\dims}$, threshold $\alpha$, sparsity parameter $c_1$, privacy parameter $(\eps,\delta)$, sensitivity upper bound $s = s_{\infty} (u(X))$ (defined in Definition~\ref{def:sensi}).}

Let $\eps_1 =0.95 \eps$, $\eps_2 = 0.05 \eps$, and $\sigma(\eps) = \frac{s\cdot \sqrt{32c_1 \log \frac{2}{\delta}}}{\eps}$ 

Let $\hat{\alpha}_0 = \alpha + \text{Lap}\Paren{\sigma(\eps_1)}$, $count=0$, and $\hat{u}=0$

 \For {$i=1$ to $\dims$}{
 let $v_i = \text{Lap} \Paren{2\sigma\Paren{\eps_1}}$
 
 \If{ $|u_i|+ v_i \ge \hat{\alpha}_{count} $ }
 {Let  $\hat{u}_i = u_i + \text{Lap} \Paren{2\sigma(\eps_2)}$
 
 Let $count = count+1$
 
 Let $\hat{\alpha}_{count} = \alpha+ \text{Lap} \Paren{\sigma(\eps_1)}$
 
 }
 }
  \If{$count \ge c_1 $ }{break}
\KwOut{ $\hat{u}$}
\caption{$\textit{NumericSparse}(u(X),\alpha,c_1,\eps,\delta)$}
\label{alg:sv}
\end{algorithm}

We also provide its theoretical guarantees. Intuitively, the algorithm outputs at most $c_1$ non-zero coordinates, and with high probability, none of the coordinates with value more than $2\alpha$ will be output as zero.

\begin{lemma}\label{lem:sparsevt}
Algorithm~\ref{alg:sv} satisfies $(\eps,\delta)$-DP. Furthermore, let $\dist = \frac{20s \Paren{\log \dims + \log\frac{4c_1}{\beta}} \sqrt{c_1\log \frac{2}{\delta} }}{\eps}$, and $ |\{i\colon |u_i| > 0 \}| \le c_1$. Then with probability at least $1-\beta$, the algorithm does not halt when $i \le \dims$. Furthermore, for all $\hat{u}_i \neq 0$:
$
|\hat{u}_i - u_i| \le \dist,
$
and for all $\hat{u}_i = 0$:
$|u_i|  \le 2\dist.$

\end{lemma}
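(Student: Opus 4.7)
The plan is to verify privacy and utility separately, with privacy reducing to the classical sparse vector analysis and utility following from Laplace tail bounds together with a union bound.

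For privacy, I would decompose $\eps = \eps_1 + \eps_2$ and handle the sequence of threshold comparisons and the sequence of noisy releases separately. The threshold portion is an instance of numeric sparse vector with at most $c_1$ positive answers, a fresh noisy threshold $\hat{\alpha}_{count}$ redrawn after each positive answer, and per-coordinate query noise $v_i$. Each individual ``above threshold'' positive answer can be made $\eps'$-DP by the standard shift-and-couple argument: on neighboring datasets $u_i$ changes by at most $s$ in $\ell_\infty$, so simultaneously shifting the threshold by $s$ and the query by $s$ in opposite directions couples the two output distributions, and this coupling is absorbed by Laplace scales $\sigma$ and $2\sigma$ respectively. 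Taking per-answer privacy $\eps' \approx \eps_1/\sqrt{8 c_1 \log(2/\delta)}$ and invoking Lemma~\ref{lem:advcomp} over $c_1$ positive answers yields $(\eps_1, \delta/2)$-DP with the prescribed $\sigma(\eps_1)$. The release portion consists of at most $c_1$ independent Laplace releases of sensitivity-$s$ quantities, and the same advanced-composition calculation produces $(\eps_2, \delta/2)$-DP overall. Basic composition then gives $(\eps,\delta)$-DP.

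For utility I would condition on a high-probability event $E$ that controls every Laplace draw simultaneously. By Laplace tail bounds and a union bound over the $\dims$ counter noises $v_i$, the at most $c_1$ threshold noises $\hat{\alpha}_j - \alpha$, and the at most $c_1$ release noises, with probability at least $1-\beta$ all three deviations are bounded by quantities of order $\sigma(\eps_1) \log(\dims c_1/\beta)$ and $\sigma(\eps_2) \log(c_1/\beta)$, each of which is at most $\dist$ by the choice of the constant $20$ in the statement. In particular, on $E$ one has $|v_i| \le \dist/2$ for all $i$ and $\hat{\alpha}_{count} \in [\dist/2, 3\dist/2]$ throughout the execution. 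Three conclusions then follow on $E$. First, every coordinate with $u_i = 0$ satisfies $|u_i| + v_i \le |v_i| < \dist/2 \le \hat{\alpha}_{count}$ and is not selected; since the hypothesis gives at most $c_1$ indices with $u_i \ne 0$, the counter never reaches $c_1$ before $i = \dims$, so the early-termination check never prematurely halts the loop. Second, any coordinate with $|u_i| > 2\dist$ satisfies $|u_i| + v_i \ge |u_i| - |v_i| > 3\dist/2 \ge \hat{\alpha}_{count}$ and is therefore selected, giving the contrapositive $\hat{u}_i = 0 \Rightarrow |u_i| \le 2\dist$. Third, for any released coordinate the release Laplace tail bound directly gives $|\hat{u}_i - u_i| \le \dist$.

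The main obstacle I expect is constant bookkeeping rather than conceptual difficulty. Matching the explicit prescription $\sigma(\eps) = s\sqrt{32 c_1 \log(2/\delta)}/\eps$ and the explicit $\dist$ in the lemma requires carefully tracking how the per-answer privacy is split between the threshold shift and the query shift in the SVT analysis, and then verifying that the union bounds over $\dims$ threshold tests and $c_1$ releases are exactly absorbed into the logarithmic factor $\log \dims + \log(4c_1/\beta)$ in $\dist$. The asymmetric split $\eps_1 = 0.95\eps,\ \eps_2 = 0.05\eps$ also needs justification: since $\sigma(\eps_2)$ is roughly $19$ times larger than $\sigma(\eps_1)$, one must check that the corresponding release-noise tail is still dominated by $\dist$, which is what ultimately fixes the constant $20$ in the definition of $\dist$.
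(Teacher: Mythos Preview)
The paper does not actually prove this lemma; it is stated without proof as a standard property of the \textit{NumericSparse} routine, with the sparse vector literature \cite{DworkNRRV09,DworkR14} cited earlier as the source. Your proposal is exactly the canonical argument one finds there: privacy via the shift--coupling analysis of each above-threshold answer together with advanced composition over the at most $c_1$ positives and over the $c_1$ Laplace releases, and utility via Laplace tail bounds plus a union bound across the $\dims$ query noises, the threshold noises, and the release noises. So there is no alternative route in the paper to compare against---you are supplying the omitted proof, and your approach is the standard one.

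One small correction worth making in the write-up. Your claim that on the good event $E$ ``the counter never reaches $c_1$ before $i=\dims$'' is slightly stronger than what $E$ actually gives you. On $E$ only indices with $u_i\ne 0$ can pass the threshold test, but if all $c_1$ of them happen to pass and they all occur before position $\dims$, the counter does reach $c_1$ early and the loop breaks. The point that rescues the conclusion is that, in that case, every index not yet processed has $u_i=0$, so $\hat u_i=0$ is already correct and $|u_i|\le 2\dist$ holds trivially. Phrase the halting argument this way rather than asserting the loop always runs to completion; the utility bounds then follow exactly as you describe. Your remark that the remaining work is constant bookkeeping---tracking the $0.95/0.05$ split and checking that the release noise $\mathrm{Lap}(2\sigma(\eps_2))$ is absorbed by the factor $20$ in $\dist$---is accurate, and that is indeed where the care is needed.
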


The following lemma can be viewed as a direct corollary.
\begin{lemma}\label{lem:sparsel2}
Given all the conditions in Lemma~\ref{lem:sparsevt}, with probability at least $1-\beta$, 
    $\norm{u - \hat{u}} \le 2.5\dist\sqrt{c_1}.$
\end{lemma}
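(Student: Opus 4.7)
The plan is to directly combine the per-coordinate error bounds from Lemma~\ref{lem:sparsevt} with the two sparsity constraints: the input vector $u$ has at most $c_1$ non-zero coordinates by hypothesis, and the algorithm outputs at most $c_1$ non-zero coordinates by construction (it halts once $count \ge c_1$).

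First, I would condition on the high-probability event of Lemma~\ref{lem:sparsevt}, which costs a single $\beta$ in the failure probability and simultaneously provides two guarantees: $|\hat{u}_i - u_i| \le \dist$ for every index with $\hat{u}_i \neq 0$, and $|u_i| \le 2\dist$ for every index with $\hat{u}_i = 0$. Next, I would split the squared error into the contributions from these two index sets,
\[
\norm{u - \hat{u}}^2 \;=\; \sum_{i:\,\hat{u}_i \neq 0} (u_i - \hat{u}_i)^2 \;+\; \sum_{i:\,\hat{u}_i = 0} u_i^2.
\]
The first sum has at most $c_1$ non-zero terms (since $\hat{u}$ has support of size at most $c_1$), each bounded by $\dist^2$, so it contributes at most $c_1 \dist^2$. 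For the second sum, indices where both $u_i$ and $\hat{u}_i$ vanish contribute nothing, so only indices in the support of $u$ matter; that support has size at most $c_1$ by assumption, and each term is bounded by $(2\dist)^2 = 4\dist^2$, contributing at most $4 c_1 \dist^2$.

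Adding the two pieces gives $\norm{u - \hat{u}}^2 \le 5 c_1 \dist^2$, and taking square roots yields $\norm{u - \hat{u}} \le \sqrt{5}\, \sqrt{c_1}\, \dist < 2.5\, \sqrt{c_1}\, \dist$, as claimed.

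There is no real obstacle here; the lemma is essentially a packaging of Lemma~\ref{lem:sparsevt} in the $\ell_2$ norm. The only subtlety worth flagging is remembering that the bound $|u_i| \le 2\dist$ in the ``zero-output'' case only needs to be summed over the (at most) $c_1$ indices where $u_i$ is non-zero, which is exactly what prevents the sum from scaling with the full dimension $\dims$.
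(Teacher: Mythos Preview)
Your proof is correct and follows essentially the same approach as the paper: condition on the event of Lemma~\ref{lem:sparsevt}, decompose $\norm{u-\hat{u}}^2$ into two sums according to the support of one of the vectors, and bound each sum by $c_1$ times the relevant per-coordinate bound to obtain $5c_1\dist^2$. The only cosmetic difference is that the paper partitions the indices by the support of $u$ (sets $A=\{i:|u_i|>0\}$ and $B=\{i:|u_i|=0\}$) whereas you partition by the support of $\hat{u}$; both routes use the same two sparsity facts and arrive at the identical final bound.
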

\begin{proof}
Let $[\dims] \coloneqq \{1,2,\ldots,\dims\}$, and $A,B$ be subsets of $[\dims]$, with $A\coloneqq \{i\colon |u_i| > 0 \}$ and $B \coloneqq \{i\colon |u_i| = 0 \}$. Note that $|A|\le c_1$, then
    $\norm{u - \hat{u}}^2 = \sum_{i \in A} (\hat{u}_i - u_i)^2 +\sum_{i \in B}\hat{u}_i^2 
    \le c_1 \dist^2 + c_1 \cdot 4\dist^2 \le 5\alpha^2 c_1$,
where the inequality comes from Lemma~\ref{lem:sparsevt}, and the fact that the algorithm at most outputs $c_1$ non-zero coordinates.
\end{proof}
\subsection{ERM preliminaries}
 We introduce the definition of smooth functions.
 \begin{definition}
We say a function $\ell\colon \mathbb{R}^\dims \rightarrow \mathbb{R}$ is $K$-smooth, if for all $w_1, w_2 \in \mathbb{R}^\dims$,
$$\absv{ f(w_2)-f(w_1) - \langle \nabla f(w_1), w_2-w_1\rangle} \le K \norm{w_2-w_1}^2. $$
 \end{definition}

\section{Improving DP ERM by sparsity}
\label{sec:dp-erm}

In Section~\ref{sec:erm_1}, we provide our theoretical results for $(\eps,\delta)$-DP ERM problems under the assumption of sparse gradients. In Section~\ref{sec:erm_2}, we show that for a broad class of problems, i.e., generalized linear models, sparse input leads to sparse gradients.
\subsection{Private ERM with sparse gradients}
\label{sec:erm_1}
We consider the following empirical risk minimization problem: given a training data set $D$ consisting of $n$ data points $D = \{d_j\}_{j=1}^{\ns}$, where
$d_j \in \RR^p$, a constraint set $\cW \in \RR^p$,
and a loss function $\ell\colon \cW \times \RR^{\dims} \rightarrow \RR$, we
want to find
$w^* = \arg\min_{w\in \cW} ~L(w;D) = \arg\min_{w\in \cW} ~
\frac1\ns{\sum_{j=1}^{\ns} \ell(w; d_j)}$ satisfying differential privacy. 

To characterize the gradient sparsity, we assume the data set has the following structure: $D$ can be evenly divided into $m$ subsets, such that the sum of the gradients of each subset is sparse. Specifically, we let $D = \{ D_1, \ldots,D_m\}$, where $D_1= \{d_1,\ldots,d_{\frac{\ns}{m}}\}, D_2= \{d_{\frac{\ns}{m}+1},\ldots,d_{\frac{2\ns}{m}}\}$, \dots, and $D_{m}= \{d_{\frac{(m-1)\ns}{m}+1},\ldots,d_{\ns}\}$, such that
\begin{align}
\label{equ:group_sparsity}
\forall i \in [m],~~ \normz{\sum_{d_j \in D_i} \nabla \ell(w;d_j)} \le c_1,~\text{for all}~ w \in W.
\end{align}

Roughly speaking, this assumption requires that the original dataset can be partitioned into several parts, so that each part exhibits some sparsity similarity.

This assumption may appear overly strict. However, we justify that it can be satisfied in many real applications. For example, in recommender systems, samples collected from the same user usually have overlapping supports, leading to input sparsity \cite{hu2017mitigating}. In these scenarios, it is natural to partition the input dataset according to the user ID. 
Similarity also exists in NLP, where training samples are collected from different sources.
In Section~\ref{sec:erm_2}, we show that for a broad class of problems, sparse input always produces sparse gradients.

We note that the sparsity assumption is used to argue the \emph{utility} guarantee of our algorithm for ERM problems, without being necessary for its \emph{privacy}.

To this end we propose Algorithm~\ref{alg:privatelr}. Intuitively, in each iteration the algorithm first selects the most competitive coordinates of the gradient, and only adds noise to them. Finally it updates the model according to the noisy version of the gradient. We provide the privacy guarantee and theoretical guarantees in Theorems~\ref{thm:ermpri} and~\ref{thm:erm}.

%

\begin{algorithm}[H]

\KwIn{Data set $D = \{d_j\}_{j=1}^{\ns} $, loss function $L(w;D)$, privacy parameters $(\eps,\delta)$, constraint set $\cW$, learning rate $\eta$, iteration times $T$, and gradient $\ell_\infty$-norm bound $\norminf{\nabla{\ell(w;d)}}\le c_2$}

Initialize $w_0$ from an arbitrary point in $\cW$

 \For {$t=0$ \textrm{to} $T$}{
 Pick $D_i \sim_u D$ with replacement 
 
Compute its average gradient $\nabla_t = \frac{m}{\ns}\sum_{d_j \in D_i} \nabla \ell(w_t;d_j)$
 
 Let $\Delta_t = \textit{NumericSparse}(\nabla_t, \alpha, c_1, \epsilon^{\prime}, \delta^{\prime})$, where $\dist = \frac{40c_2m \Paren{\log \dims + \log (4c_1 \ns)} \sqrt{c_1\log \frac{2}{\delta^\prime} }}{n\eps^\prime}$, $\epsilon^{\prime} = \frac{\epsilon\cdot m}{2\sqrt{2T \log \frac{2}{\delta}}}$ and $\delta^{\prime} = \frac{\delta m}{2T}$ \
 
 $w_{t+1} = w_t - \eta \Delta_t$\
 }
\KwOut{ $w^{priv} = w_T$}

\caption{Differentially private ERM with sparse gradients }
\label{alg:privatelr}
\end{algorithm}

\begin{theorem}[Privacy]
\label{thm:ermpri}
With the assumption that $m\le 10\sqrt{T}$, Algorithm~\ref{alg:privatelr} satisfies $(\eps,\delta)$-DP. 
\end{theorem}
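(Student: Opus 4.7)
The plan is to combine three standard ingredients: the per-iteration DP guarantee of \emph{NumericSparse}, privacy amplification by subsampling arising from the uniformly random choice of $D_i$, and the strong composition theorem over $T$ rounds. The concrete values of $\eps'$ and $\delta'$ in Algorithm~\ref{alg:privatelr} look reverse-engineered from this chain, so the proof is largely a verification that they satisfy the resulting inequality.

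For a single iteration, conditional on $D_i$, the $\ell_\infty$-sensitivity of the scaled average gradient $\nabla_t = (m/\ns)\sum_{d_j\in D_i}\nabla \ell(w_t;d_j)$ is at most $2c_2 m/\ns$: changing one data point alters at most one summand, whose coordinates are bounded by $c_2$ in absolute value. Plugging $s = 2c_2 m/\ns$ and $\beta = 1/\ns$ into the formula for $\alpha$ in Lemma~\ref{lem:sparsevt} reproduces (up to absolute constants) the threshold used in the algorithm. Hence the call to \emph{NumericSparse} is $(\eps',\delta')$-DP with respect to $D_i$, and the parameter update $w_{t+1} = w_t - \eta \Delta_t$ is post-processing.

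Since $D_i$ is picked uniformly at random with replacement from the $m$ partitions, a single-point change to $D$ alters the iteration output only when the containing subset is selected, which occurs with probability $1/m$. Applying Lemma~\ref{lem:dp_sample} with $\gamma = 1/m$ (whose precondition $\eps'\le 1$ I check below) shows that each iteration, viewed as a mechanism on $D$, is $(\eps'/m,\delta'/m)$-DP. Then strong composition (Lemma~\ref{lem:advcomp}) over $T$ adaptive rounds with $\tilde\delta = \delta/2$ yields the overall parameters
\[
\tilde\eps \;=\; T(\eps'/m)\bigl(e^{\eps'/m}-1\bigr) + (\eps'/m)\sqrt{2T\log(2/\delta)}, \qquad \tilde\delta_{\text{tot}} \;=\; T\delta'/m + \delta/2.
\]
Substituting $\eps' = \eps m/(2\sqrt{2T\log(2/\delta)})$ makes the second term in $\tilde\eps$ equal to $\eps/2$, and substituting $\delta' = \delta m/(2T)$ makes $\tilde\delta_{\text{tot}} = \delta$.

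The main obstacle is controlling the lower-order quadratic term $T(\eps'/m)(e^{\eps'/m}-1)$ and verifying the precondition $\eps'\le 1$ needed to invoke Lemma~\ref{lem:dp_sample}. This is precisely where the hypothesis $m\le 10\sqrt{T}$ enters: it forces $\eps'/m = \eps/(2\sqrt{2T\log(2/\delta)})$ to be small (using $\eps\le 1$ and the implicit lower bound on $\log(2/\delta)$ standard in DP analyses), so that $e^{\eps'/m}-1$ is bounded by roughly $2\eps'/m$ and the quadratic contribution collapses to $O(\eps^2/\log(2/\delta))$, which is absorbed into the remaining $\eps/2$ budget. The same smallness simultaneously yields $\eps'\le 1$, validating the use of the amplification lemma. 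Once these two elementary estimates are carried out, the final bound is exactly $(\eps,\delta)$-DP.
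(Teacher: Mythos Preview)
Your approach is exactly the paper's: per-iteration $(\eps',\delta')$-DP from \emph{NumericSparse}, amplification by subsampling with rate $1/m$ (Lemma~\ref{lem:dp_sample}), then strong composition (Lemma~\ref{lem:advcomp}) with $\tilde\delta=\delta/2$ over $T$ rounds; the paper's own proof is in fact terser than yours and leaves both residual checks implicit.

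One small misattribution worth fixing: you say the hypothesis $m\le 10\sqrt{T}$ ``forces $\eps'/m$ to be small'', but $\eps'/m=\eps/\bigl(2\sqrt{2T\log(2/\delta)}\bigr)$ does not contain $m$ at all, so the quadratic composition term $T(\eps'/m)(e^{\eps'/m}-1)\approx T(\eps'/m)^2=\eps^2/(8\log(2/\delta))$ is bounded purely by the standing assumptions on $\eps$ and $\delta$, with no reference to $m$ or the hypothesis. The hypothesis $m\le 10\sqrt{T}$ is needed \emph{only} for the other check you list, the precondition $\eps'\le 1$ of Lemma~\ref{lem:dp_sample}: it gives $\eps'=m\cdot(\eps'/m)\le 10\sqrt{T}\cdot \eps/\bigl(2\sqrt{2T\log(2/\delta)}\bigr)=5\eps/\sqrt{2\log(2/\delta)}$, which is at most $1$ in the usual regime. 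With that reattribution your argument is complete.
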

\smallskip
 \begin{proof}
 We note that in each iteration, when fixing the randomness due to sampling, step 4 itself satisfies $( \epsilon^{\prime}, \delta^{\prime})$-DP, where $\epsilon^{\prime} = \frac{\epsilon\cdot m}{2\sqrt{2T \log \frac{2}{\delta}}}$ and $\delta^{\prime} = \frac{\delta m}{2T}$. Then by the sampling property of differential privacy (Lemma~\ref{lem:dp_sample}), each iteration ensures $(\frac{\epsilon}{\sqrt{2T \log \frac{2}{\delta}}}, \frac{\delta}{2T})$-DP. To conclude the proof, we apply the ``strong composition theorem'' (Lemma~\ref{lem:advcomp}) with $\tilde{\delta} = \frac{\delta}{2}$ and $k=T$.
 \end{proof}

\begin{theorem}[Utility]
\label{thm:erm}
We assume $\forall d,~\ell(w;d)$ is $K$-smooth; $\forall w,d$, $\norminf{\nabla{\ell(w;d)}}\le c_2$ and $\norm{\nabla{\ell(w;d)}}\le G$. Furthermore, under Assumption \eqref{equ:group_sparsity},
\vspace{-8pt}
\begin{enumerate}
\item If $L(w^0;D) - L(w^*;D) \le D_{L}$, and we set $T = \max \Paren {\frac{m^2}{100},\ns}$, then
\begin{align}
\expectation{\frac1{T} \sum_{t=1}^T \norm{\nabla L (w_t)}^2} = \widetilde{O}\Paren{\frac{G\Paren{c_1c_2+ \sqrt{K D_{\ell } }}} {\eps}  \Paren{\frac{m}{n}+\frac1{\sqrt{\ns}} }}. \nonumber
\end{align}
\item
Assume that for all $d$,  $\ell(w;d)$ is convex in $w$, and $\forall t$, $\norm{w_t - w^*} \le D_w$. Let $T = \max \Paren {\frac{m^2}{100},\ns}$, then
\begin{align}
\expectation{\frac1T \sum_{t=1}^{T} \Paren{ L(w_t;D) - L(w^*;D) }}=\widetilde{O}\Paren{\frac{D_w\Paren{G+c_1c_2}} {\eps} \cdot \Paren{\frac{m}{n}+\frac1{\sqrt{\ns}} }}. \nonumber
\end{align}

\end{enumerate}

\end{theorem}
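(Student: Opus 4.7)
The high-level plan is to treat Algorithm~\ref{alg:privatelr} as a stochastic gradient method whose update $\Delta_t$ approximates the sampled gradient $\nabla_t = \frac{m}{n}\sum_{d_j\in D_i}\nabla\ell(w_t;d_j)$ up to error coming from \textit{NumericSparse}. The crucial first step is to combine Assumption~\eqref{equ:group_sparsity} with Lemma~\ref{lem:sparsel2}: since $\nabla_t$ is $c_1$-sparse for the sampled block $D_i$, and $\alpha$ is chosen with confidence $\beta=1/n$, a union bound over $t=0,\ldots,T-1$ yields the uniform high-probability guarantee $\|\Delta_t-\nabla_t\|_2\le 2.5\alpha\sqrt{c_1}$ for all $t$. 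The contribution of the low-probability failure event can be controlled by the deterministic worst-case bound on $\|\Delta_t\|_2$, and is absorbed into the $\widetilde{O}(\cdot)$.

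For the non-convex claim, I would start from $K$-smoothness:
\[
L(w_{t+1}) \le L(w_t) - \eta\langle\nabla L(w_t),\Delta_t\rangle + K\eta^2\|\Delta_t\|^2.
\]
Writing $\Delta_t=\nabla_t+(\Delta_t-\nabla_t)$, uniform sampling of $D_i$ gives $\mathbb{E}_t[\nabla_t]=\nabla L(w_t)$, Cauchy--Schwarz with $\|\nabla L(w_t)\|\le G$ bounds the cross-term by $2.5G\alpha\sqrt{c_1}$, and the triangle inequality gives $\|\Delta_t\|^2\le 2G^2+12.5\alpha^2 c_1$. Summing, telescoping via $L(w_0)-L(w_T)\le D_L$, dividing by $T\eta$, and optimizing $\eta=\Theta\bigl(\sqrt{D_L/(TK(G^2+\alpha^2 c_1))}\bigr)$ yields a bound of order $(G+\alpha\sqrt{c_1})\sqrt{KD_L/T}+G\alpha\sqrt{c_1}$. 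Substituting $\alpha=\widetilde{\Theta}(c_2\sqrt{c_1 T}/(n\epsilon))$ makes $G\alpha\sqrt{c_1}=\widetilde{O}(Gc_1c_2\sqrt{T}/(n\epsilon))$, and the choice $T=\max(m^2/100,n)$ turns $\sqrt{T}/n$ into $\Theta(m/n+1/\sqrt{n})$, while $\sqrt{1/T}\le 1/\sqrt{n}\le m/n+1/\sqrt{n}$, giving the claimed rate.

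For the convex part, I would use the standard potential $\|w_t-w^*\|^2$. Expanding
\[
\|w_{t+1}-w^*\|^2 = \|w_t-w^*\|^2 - 2\eta\langle\Delta_t,w_t-w^*\rangle + \eta^2\|\Delta_t\|^2,
\]
taking expectation, decomposing $\Delta_t$ as above, and using convexity $L(w_t)-L(w^*)\le\langle\nabla L(w_t),w_t-w^*\rangle$ together with $\|w_t-w^*\|\le D_w$, I would obtain
\[
\mathbb{E}[L(w_t)-L(w^*)] \le \frac{\mathbb{E}\|w_t-w^*\|^2-\mathbb{E}\|w_{t+1}-w^*\|^2}{2\eta} + 2.5D_w\alpha\sqrt{c_1} + \tfrac{\eta}{2}(2G^2+12.5\alpha^2 c_1).
\]
Averaging telescopes the distance terms into $D_w^2/(2\eta T)$; optimizing $\eta$ and plugging in the same $\alpha$ and $T$ as above produces the $D_w(G+c_1c_2)/\epsilon\cdot(m/n+1/\sqrt{n})$ form.

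The main obstacle will be the bookkeeping around the sparse-vector error: Lemma~\ref{lem:sparsevt} gives only a high-probability (not expectation) guarantee, so one must carefully account for the rare failure event across all $T$ iterations—this is exactly why $\alpha$ is stated with a $\log(4c_1 n)$ factor (i.e., $\beta=1/n$) rather than a single-shot value, and why $m\le 10\sqrt{T}$ is assumed so the amplification/composition parameters land in the admissible regime of Lemmas~\ref{lem:dp_sample}--\ref{lem:advcomp}. The remaining algebra of collapsing $\sqrt{KD_L/T}$, $\alpha\sqrt{c_1 KD_L/T}$, and $G\alpha\sqrt{c_1}$ into the single symmetric expression $(c_1c_2+\sqrt{KD_L})\cdot(m/n+1/\sqrt{n})/\epsilon$ is then routine once $T=\max(m^2/100,n)$ is fixed.
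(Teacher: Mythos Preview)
Your outline is essentially the paper's approach. The paper packages the SGD analysis you sketch into a single black-box lemma (Lemma~\ref{lem:theer}, with the non-convex half cited from Agarwal et al.\ and the convex half proved in the appendix) phrased in terms of a variance parameter $\sigma^2$ and a bias parameter $B^2$, and then substitutes $B=\widetilde{O}\bigl(c_1c_2\sqrt{T}/(n\epsilon)\bigr)$ and $\sigma^2=2B^2+2G^2$; what you call the ``cross-term'' and the $\|\Delta_t\|^2$ bound are exactly the $GB$ and $\sigma^2$ contributions there. The case split on whether $T=n$ or $T=m^2/100$ and the collapse of $\sqrt{T}/n$ into $m/n+1/\sqrt{n}$ are identical.

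One technical point to fix: your union bound over all $T$ iterations with $\beta=1/n$ does not work once $T\ge n$ (in particular in the branch $T=n$), since the aggregate failure probability is $T/n\ge 1$. The paper avoids this by bounding the \emph{per-iteration} expectation: with probability $1-1/n$ one has $\|\nabla_t-\Delta_t\|\le 2.5\alpha\sqrt{c_1}$, and on the complement one uses a crude $O(G)$ bound, giving $\mathbb{E}\|\nabla_t-\Delta_t\|\le 2.5\alpha\sqrt{c_1}+O(G/n)$ for each $t$ separately; this is all the descent/potential analysis needs, since expectations are taken iteration by iteration. Your argument goes through verbatim once you replace the simultaneous high-probability event by this per-step expectation bound (or, equivalently, once you shrink $\beta$ to $1/(nT)$ and absorb the extra $\log T$ into the $\widetilde{O}$).
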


 \begin{proof}

 First by the privacy guarantees of the sparse vector technique (Lemma~\ref{lem:sparsel2}), for each iteration with probability greater than $1-\frac1{n}$,
 \begin{align}
 &\norm{\nabla_t - \Delta_t} \le \frac{100c_1 c_2 \sqrt{T}\Paren{\log \dims + \log(4c_1\ns) } \log \frac{T}{m\delta}}{\ns\eps}, \nonumber
 \end{align}
where we remark that the sensitivity upper bound is $s_\infty = \frac{2c_2m}{\ns}$.

Note that from the assumption $\norm{\nabla_t - \Delta_t} \le 2G$ for sure. Therefore, 
 \begin{align}
 \expectation{\norm{\nabla_t - \Delta_t}} &\le \frac{100c_1 c_2 \sqrt{T}\Paren{\log \dims + \log(4c_1\ns) } \log \frac{T}{m\delta}}{\ns\eps}+\frac{G}{\ns}\nonumber \\
 &\le \frac{200 c_1 c_2 \sqrt{T}\Paren{\log \dims + \log(4c_1\ns) } \log \frac{T}{m\delta}}{\ns\eps}\nonumber.
 \end{align}
 where the second inequality comes from the fact that $T\ge 1$, and the second term dominates.
 
 Now we need the following lemma. The first half comes from~\cite{AgarwalSYKM18}, and we prove the second half of the lemma in Appendix~\ref{app:proof}.
 \begin{lemma}
 \label{lem:theer}
 Suppose $\forall d$, $\ell(w;d)$ is $K$-smooth, with $\norm{\nabla{\ell(w;d)}}\le G$. Let $w^0$ satisfy $L(w^0;D) - L(w^*;D) \le D_{L}$. Let $\eta \coloneqq \min \Paren{ \frac1K, \sqrt{2D_L} \Paren{\sigma \sqrt{KT}}^{-1} }$, then after $T$ rounds,
 $$
 \expectation{\frac1{T} \sum_{t=1}^T \norm{\nabla L (w_t)}^2} \le \frac{2D_{L} K}{T} + \frac{2\sqrt{2} \sigma \sqrt{K D_{L}}} {\sqrt{T}}+ GB.
 $$
Besides, if we further assume $\forall d$, $\ell(w;d)$ is convex, and $\forall t \in [T]$, $\norm{w_t - w^*} \le D_w$. Let $\eta \coloneqq \min \Paren{ \frac1K, D_w \Paren{\sigma \sqrt{T}}^{-1} }$, then after $T$ rounds.
 \begin{align}
 \expectation{\frac1T  \sum_{t=1}^T \Paren{ L(w_t;D) - L(w^*;D) }}  \le \frac{D_w^2 K}{T} + \frac{D_w \sigma}{\sqrt{T}}+ 2B D_w\Paren{1 + \frac{G}{\sigma \sqrt{T}}}, \nonumber
 \end{align}
 where 
 \begin{align}
 &\sigma^2 = 2\max_{1\le t \le T} \expectation{\norm{\nabla_{t} -\nabla L(w_t;D)}^2 } + 2\max_{1\le t \le T} \expectation{\norm{\nabla_{t} - \Delta_t}^2}, \text{and} \nonumber\\ 
 &B^2= 2\max_{1\le t \le T} \expectation{\norm{\nabla_{t} - \Delta_t}^2}. \nonumber
 \end{align}
 \end{lemma}


First we consider the non-convex setting. By the definition of $B$ and $\sigma$ in Lemma~\ref{lem:theer}, we have $B = \frac{300 c_1 c_2 \sqrt{T}\Paren{\log \dims + \log(4c_1\ns) } \log \frac{T}{m\delta}}{\ns\eps}$, and $\sigma^2 = 2B^2 + 2G^2$. Therefore,
 \begin{align}
 \expectation{\frac1{T} \sum_{t=1}^T \norm{\nabla L (w_t)}^2 } &\le \frac{2D_{L} K }{T}+ \frac{4\sqrt{2} (B+G) \sqrt{K D_{\ell }}} {\sqrt{T}} + GB.  \nonumber
 \end{align}
Suppose $m<10\sqrt{\ns}$, where $T=\max\Paren{\frac{m^2}{100},\ns} = n$. By Lemma~\ref{lem:theer}, 
 \begin{align}
 \expectation{\frac1{T} \sum_{t=1}^T \norm{\nabla L (w_t)}^2}  &= O\Paren{ \frac{ D_{L} K }{\ns}+ \frac{ G \sqrt{K D_{\ell }}} {\sqrt{\ns}} +\frac{ c_1 c_2 G  \Paren{\log \dims + \log(4c_1\ns) } \log \frac{m}{\delta}  } {\sqrt{\ns}\eps}  }\nonumber\\
 & = \widetilde{O}\Paren{\frac{G\Paren{c_1c_2+ \sqrt{K D_{\ell } }}    } {\sqrt{\ns}\eps}  }. \nonumber
 \end{align}
We then consider the case when $m\ge 10\sqrt{\ns}$, where $T={\frac{m^2}{100}}$. Note that $\frac1{m} \le \frac{m}{100n}$.
 \begin{align}
 \expectation{\frac1{T} \sum_{t=1}^T \norm{\nabla L (w_t)}^2}  &= O\Paren{ \frac{ D_{L} K }{m^2}+ \frac{ G \sqrt{K D_{\ell }}} {m} +\frac{ c_1 c_2 m G  \Paren{\log \dims + \log(4c_1\ns) } \log \frac{m}{\delta}  } {\ns\eps}  }\nonumber\\
 & = \widetilde{O}\Paren{\frac{Gm\Paren{c_1c_2+ \sqrt{K D_{\ell } }}    } {n\eps}  }. \nonumber
 \end{align}
 Therefore, 
 \begin{align}
 \expectation{\frac1{T} \sum_{t=1}^T \norm{\nabla L (w_t)}^2} = \widetilde{O}\Paren{\frac{Gm\Paren{c_1c_2+ \sqrt{K D_{\ell } }}} {n\eps}  }+\widetilde{O}\Paren{\frac{G\Paren{c_1c_2+ \sqrt{K D_{\ell } }}} {\sqrt{\ns}\eps}}, \nonumber
 \end{align}
 and we have proved the first part of Theorem~\ref{thm:erm}.

 Similarly, for convex loss functions,
 \begin{align}
 \expectation{\frac1T  \sum_{t=1}^T\Paren{ L(w_t;D) - L(w^*;D) }  } &\le \frac{D_w^2 K}{T} + \frac{D_w (B+G )}{\sqrt{T}}+ 2B D_w\Paren{1 + \frac{G}{(B+G)\sqrt{T}}}. \nonumber\\
 &\le \frac{D_w^2 K}{T} + \frac{D_w G}{\sqrt{T}}+ 2B D_w. \nonumber
 \end{align}
 If we take $T =\max(\frac{m^2}{100},\ns)$, and by similar arguments,
 \begin{align}
 \expectation{\frac1T \sum_{t=1}^T \Paren{ L(w_t;D) - L(w^*;D) }}  &=  \widetilde{O}\Paren{ \frac{D_w (G+c_1c_2)}{\sqrt{\ns}\eps} + \frac{D_w m (G+c_1c_2)}{n\eps}}.\nonumber
 \end{align}

 %
 %
 %
 %

 \end{proof}

\subsection{Sparse features lead to sparse gradients}
\label{sec:erm_2}
In this section, we show that for generalized linear model (GLM), sparse input always leads to sparse gradients. 


\smallskip

\noindent\textbf{GLM:}
Let a dataset $D=(x_j, y_j)_{j=1}^\ns$,  where $ \forall j, x_j \in \mathbb{R}^\dims$, and $y_j \in [0, 1]$. Let $\Phi\colon \mathbb{R} \rightarrow \mathbb{R}$ be a \textit{cumulative generating function}. The objective of GLM is to minimize $\frac{1}{n}\sum_{i=1}^n[\Phi(\langle x_i, w\rangle)-y_i\langle x_i, w\rangle]$.


In the following lemma, we observe that for GLM, sparse input produces sparse gradients. 
\begin{lemma}
For all $x \in \mathbb{R}^\dims$ with $\normz{x}\le c_1$, and $y \in [0,1]$,
$$\normz{\nabla \ell(w;(x,y))} \le c_1.$$
\end{lemma}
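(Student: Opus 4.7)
The plan is to observe that the GLM loss $\ell(w;(x,y)) = \Phi(\langle x, w\rangle) - y \langle x, w\rangle$ depends on $w$ only through the linear form $\langle x, w\rangle$, so by the chain rule its gradient with respect to $w$ will be a scalar multiple of $x$. Since scaling by a scalar cannot increase the support size, the sparsity of $x$ transfers directly to the sparsity of the gradient.

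Concretely, I would first differentiate: using $\nabla_w \langle x, w\rangle = x$ and the chain rule on $\Phi$, one obtains
\[
\nabla_w \ell(w;(x,y)) = \bigl(\Phi'(\langle x, w\rangle) - y\bigr)\, x.
\]
This requires $\Phi$ to be differentiable, which is the standard assumption for a cumulant generating function in GLMs; I would note this briefly. The expression shows the gradient is the vector $x$ multiplied by the scalar $\Phi'(\langle x, w\rangle) - y$.

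Next I would argue that for any vector $x$ and any scalar $c$, $\normz{cx} \le \normz{x}$, since a coordinate of $cx$ can only be nonzero when the corresponding coordinate of $x$ is nonzero (with equality unless $c = 0$, in which case the left side is $0$). Applying this with $c = \Phi'(\langle x, w\rangle) - y$ yields
\[
\normz{\nabla_w \ell(w;(x,y))} = \normz{\bigl(\Phi'(\langle x, w\rangle) - y\bigr)\, x} \le \normz{x} \le c_1,
\]
which is the claim.

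There is no real obstacle here: the lemma is essentially a one-line consequence of the structure of GLMs. The only thing worth stating carefully is the chain-rule computation and the elementary fact about the $\ell_0$ quasi-norm under scalar multiplication; everything else is immediate.
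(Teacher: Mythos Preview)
Your proposal is correct and follows exactly the same approach as the paper: compute $\nabla \ell(w;(x,y)) = (\Phi'(\langle x,w\rangle) - y)\,x$ via the chain rule, then note that a scalar multiple of $x$ has $\ell_0$ quasi-norm at most $\normz{x}\le c_1$. If anything, you are slightly more careful than the paper in making the scalar-multiplication step explicit.
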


\begin{proof}
Note that $\ell(w;(x,y)) = \Phi(\langle x_i, w\rangle)-y_i\langle x_i, w\rangle $, and 
$\nabla \ell(w;(x,y)) = \Paren{\Phi^\prime (\langle x_i, w\rangle)- y_i} \cdot x_i$.  Therefore, $\normz{\nabla \ell(w;(x,y))} \le c_1.$
\end{proof}

As a corollary, for a group of samples $(x_i,y_i)_{i=1}^{\frac{\ns}{m}}$, assuming the non-zero coordinates are the same for each $x_i$ gives the condition in Equation~\eqref{equ:group_sparsity}.


\section {Improving DP-SGD in neural networks}
\label{sec:nn}

In this section, we move to a specific problem of privately training neural networks, which is arguably the most important application of DP-ERM. \cite{AbadiCGMMTZ16} put forward the DP-SGD algorithm, which has been explored in a variety of domains such as federated learning of language models~\cite{mcmahan2017learning} or sharing of clinical data~\cite{beaulieu2019privacy}. However, DP-SGD suffers from a loss in accuracy compared to its non-private version, especially for smaller datasets and high-dimensional networks~\cite{BagdasaryanPS19}.

Following the observations from the previous sections, a natural question is how to improve DP-SGD for tasks that exhibit input sparsity, which are ubiquitous---and practically important---in domains where neural network models excel. For example, in language models, the first layer of the neural network is usually an embedding layer, whose input is extremely sparse. Accordingly, only a tiny fraction of parameters are picked up and updated in each round of training.

For models with sparse inputs, applying DP-SGD can lead to a poor performance, since the noise has to be added to all the dimensions. However, we cannot directly apply Algorithm~\ref{alg:privatelr} because of the following two reasons. First, there is no upper bound of $\normz{\nabla \ell}$ or $\norminf{\nabla \ell}$. Second, Algorithm~\ref{alg:privatelr} has to aggregate mini-batches according to the feature similarity, which is impractical for training large networks. In this section, we develop a modification of the previous algorithm to handle wide neural networks,  as outlined in Algorithm~\ref{alg:privatenn}.

%
%
%
%

\begin{algorithm}[H]

\KwIn{Data set $D = \{d_1,\ldots,d_\ns\}$, loss function $L(w;D)= \frac1{\ns}\sum_{j=1}^{\ns} \ell (w;d_j)$, where $w \in \mathbb{R}^{\dims}$,  $(\eps^\prime,\delta^\prime)$-DP selection algorithm $M: \mathbb{R}^{\dims}  \rightarrow  \{0,1\}^{\dims}$, parameters: learning rate $\eta$, noise multiplier $\sigma$, mini-batch size $b$, sparsity parameter $\gamma$, gradient norm bound $S_1$, $S_2$ }

Initialize $w_0$ randomly

 \For {$t=0$ to $T-1$}{
Take a random batch $b_t$ with sampling probability $b/\ns$
 
For each $d_j \in b_t$, compute $g_j = \nabla \ell(w_t;d_j)$

$\rhd$ The first gradient clipping

$\hat{g}_j = g_j/ \max \Paren{1, \frac{\norm{g_j}}{S_1}}$, and $\hat{g} = \frac1{b}\sum_{d_j \in b_t}\hat{g}_j $

$\rhd$ Private selection

Let $M = \cA (\hat{g},\gamma)$, and $\Delta = M \odot \hat{g}$, where $\odot$ is the Hadamard product

$\rhd$ The second gradient clipping

$\hat{\Delta} =\Delta/ \max \Paren{1, \frac{\norm{\Delta}}{S_2}}$

$\rhd$ Noise addition and parameter update:

$\widetilde{\Delta} = \hat{\Delta}+ N\Paren{0, \sigma^2 \min\Paren{\frac{S_1^2}{b^2}, S_2^2 }\cdot \mathbb{I}}$
 
$w^{t+1} = w_t - \eta \Paren{ \widetilde{\Delta} \odot M}$
}
\KwOut{ $w^{priv} = w_T$}

\caption{Differentially private optimization with sparse gradients }
\label{alg:privatenn}
\end{algorithm}

\paragraph{The first gradient clipping:} Similarly to DP-SGD, our algorithm requires a bounded influence of each individual sample. We clip each gradient in the $\ell_2$ norm: i.e., the gradient $g_j$ is replaced by $\hat{g}_j = g_j/ \max \Paren{1, \frac{\norm{g_j}}{S_1}}$, which ensures that if $\norm{g_j}\ge S_1$, then $\norm{g_j}$ is scaled down to $S_1$, else its norm is preserved. Then we aggregate the gradient of each sample and compute $\hat{g}$, which is the average gradient of the batch.

\paragraph{Private selection:} This is a new step that specifically targets input sparsity. Its objective is to select the most ``competitive'' coordinates from $\hat{g}$, which will be updated in the current iteration. It is a key step in our algorithm, since it avoids adding too much noise to the parameters. In this step, $M$ is a binary vector, indicating which coordinate is selected, with $\normz{M}/ \dims \approx \gamma$. $\cA$ can be any differentially private selection algorithm, such as the sparse vector technique, exponential mechanism~\cite{DworkR14}, or the algorithm proposed in~\cite{DurfeeR19}. It is likely that another clipping is necessary, depending on the output of the private selection algorithm. We describe the exponential mechanism in Algorithm~\ref{alg:exp-selection} as an example.


\begin{algorithm}[H]

\KwIn{Input gradient $\hat{g} \in \mathbb{R}^\dims$, sparsity parameter $\gamma$, privacy parameter $\eps^\prime$, $\delta^\prime$, gradient $\ell_\infty$ norm bound $S_0$}

Initialize $M = 0$.

 \For {$t$ \textrm{in} $[\dims]$}{
$\hat{g}(t) = \hat{g}(t)/ \max \Paren{1, \frac{\norm{\hat{g}(t)}}{S_0}}$
}

 \For {$t=0$ \textrm{to} $\lfloor \gamma \dims \rfloor$}{
 
Randomly draw dimension $k$ with probability proportional to $\exp\Paren{\frac{\eps^{\prime\prime} \absv{\hat{g}(k)}}{2S_0}}$, where $\eps^{\prime\prime}=\frac{\eps^\prime}{ \sqrt{2 \lfloor \gamma \dims \cdot \rfloor \log\frac{1}{\delta^\prime}}}$

$M_k = 1$, $\hat{g}(k)= -\infty$
 }

\KwOut{$M \in \mathbb{R}$}

\caption{$(\eps^\prime, \delta^\prime)$-DP selection with exponential mechanism }
\label{alg:exp-selection}
\end{algorithm}

\medskip 

\noindent \textbf{The second gradient clipping:} We remark that $\norm{\Delta}$ is usually much smaller than $\norm{\hat{g}}$, because of the impact of the private selection procedure. Applying the second gradient clipping, we can further reduce the amount of the noise, i.e., the standard deviation of the Gaussian noise. We note that this step is necessary, since we observed that it had a significant influence on the algorithm's performance in the experiments. 

\medskip

\noindent \textbf{Noise adding and parameter update:} We first remark that $\norm{\hat{\Delta}} \le  S_2$, so the $\ell_2$-sensitivity of $\hat{\Delta}$ is upper bounded by $2\cdot \min\Paren{\frac{S_1}{b}, S_2}$. Second, since we have already picked up the coordinates to be updated, it is no longer necessary to add noise to all the coordinates. Instead, the noise is only added to the dimensions which are chosen by the private selection algorithm.


Finally, we give the formal privacy guarantee of our algorithm, where we defer the proof to the supplement.
\begin{theorem}
\label{the:neural}
With the assumption that $ \frac{b}{\ns} \Paren{ \eps^\prime+ \frac{2\sqrt{2\log\Paren{1.25/\delta^\prime}}}{\sigma}} \le \frac{1}{\sqrt{T}}$,

Algorithm~\ref{alg:privatenn} satisfies $\Paren{ \frac{4b \sqrt{T\log\Paren{\frac{n}{2bT\delta^\prime}}}}{\ns} \cdot \Paren{ \eps^\prime+ \frac{2\sqrt{2\log\Paren{1.25/\delta^\prime}}}{\sigma}} , \frac{4bT\delta^\prime}{\ns}}$-DP.
\end{theorem}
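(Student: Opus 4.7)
The plan is to decompose the privacy accounting into three layers. First, I would bound the per-iteration privacy loss by combining the selection step and the Gaussian noise step via basic composition. Second, I would apply subsampling amplification (Lemma~\ref{lem:dp_sample}) to each iteration to exploit the mini-batch sampling. Third, I would invoke strong composition (Lemma~\ref{lem:advcomp}) across the $T$ rounds and choose $\tilde{\delta}$ so that the final $\delta$ evaluates to $4bT\delta^\prime/\ns$.

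For a single iteration with the mini-batch $b_t$ fixed, the algorithm releases the selection mask $M$ (which is $(\eps^\prime,\delta^\prime)$-DP by assumption on $\cA$) and the noisy update $\widetilde{\Delta}$. The key step is to verify that the $\ell_2$-sensitivity of $\hat{\Delta}$ with respect to a single-sample change is bounded by $2\min\Paren{S_1/b, S_2}$. The first bound holds because the first clipping forces $\norm{\hat{g}_j}\le S_1$, so swapping one sample changes $\hat{g}$ by at most $2S_1/b$, and both the Hadamard masking by $M$ and the second clipping are non-expansive. The second bound is immediate from the second clipping. Matching this to the noise variance $\sigma^2 \min\Paren{S_1^2/b^2, S_2^2}$, Lemma~\ref{lem:gaussian} guarantees that the noise addition is $(\eps_G,\delta^\prime)$-DP with $\eps_G = 2\sqrt{2\log(1.25/\delta^\prime)}/\sigma$. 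Basic composition then gives that one iteration (before sampling) is $(\eps^\prime+\eps_G, 2\delta^\prime)$-DP, and the weight update $w_{t+1}=w_t-\eta(\widetilde{\Delta}\odot M)$ is post-processing.

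Setting $\beta \coloneqq \eps^\prime+\eps_G$ and applying Lemma~\ref{lem:dp_sample} with sampling rate $b/\ns$ gives a per-iteration guarantee of $\Paren{\frac{b\beta}{\ns}, \frac{2b\delta^\prime}{\ns}}$-DP. Letting $\eps_{it}=b\beta/\ns$ and choosing $\tilde{\delta}=\frac{2bT\delta^\prime}{\ns}$ in Lemma~\ref{lem:advcomp}, the final $\delta$ becomes $T\cdot \frac{2b\delta^\prime}{\ns}+\tilde{\delta}=\frac{4bT\delta^\prime}{\ns}$ as desired, and the final $\eps$ is
\[
\tilde{\eps} \le T\eps_{it}(e^{\eps_{it}}-1)+\eps_{it}\sqrt{2T\log(1/\tilde{\delta})}.
\]
The hypothesis $\eps_{it}\le 1/\sqrt{T}$ lets me use $e^{\eps_{it}}-1\le 2\eps_{it}$, so the first term is at most $2T\eps_{it}^2\le 2\sqrt{T}\,\eps_{it}$. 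Adding this to the second term and using $2\sqrt{T}+\sqrt{2T\log(1/\tilde{\delta})}\le 4\sqrt{T\log(1/\tilde{\delta})}$ (valid whenever $\log(1/\tilde{\delta})\ge 1$) yields the stated bound $\frac{4b\sqrt{T\log(\ns/(2bT\delta^\prime))}}{\ns}\beta$.

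The main obstacles I anticipate are two. The first is carefully arguing the sensitivity of $\hat{\Delta}$ when $M$ itself is a function of the private data: strictly speaking the output is the pair $(M,\widetilde{\Delta})$, and one needs composition rather than pure post-processing on $M$, which is why the per-iteration accounting uses $2\delta^\prime$ instead of $\delta^\prime$. The second is justifying the implicit prerequisite $\beta\le 1$ needed to invoke the subsampling lemma; this follows in the operating regime since $\beta\le \ns/(b\sqrt{T})$ is typically small, but should be noted. The remaining work is arithmetic: balancing $\tilde{\delta}$ against the composition bound and bookkeeping constants to produce the advertised factor of $4$.
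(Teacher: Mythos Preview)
Your proposal is correct and follows essentially the same route as the paper: Gaussian mechanism for the noise step, basic composition with the $(\eps',\delta')$-DP selection to get a per-iteration guarantee of $(\eps'+\eps_G,2\delta')$, subsampling amplification via Lemma~\ref{lem:dp_sample}, and then strong composition over $T$ rounds with the slack parameter chosen as $T\tilde{\delta}$ so that the final $\delta$ equals $4bT\delta'/\ns$. Your explicit handling of the sensitivity of $\hat{\Delta}$ (treating $M$ as already released and using non-expansiveness of masking and clipping) and your flagging of the implicit hypothesis $\eps'+\eps_G\le 1$ needed for Lemma~\ref{lem:dp_sample} are points the paper's proof leaves implicit, but the overall structure and arithmetic are the same.
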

\begin{proof}
In each iteration, there are two steps which incur privacy costs: private selection and noise addition.
Note that the noise adding satisfies $(\frac{2\sqrt{2\log\Paren{1.25/\delta^\prime}}}{\sigma},\delta^\prime)$-DP, by the privacy guarantee of the Gaussian mechanism (Lemma~\ref{lem:gaussian}).
Then by the sampling theorem (Lemma~\ref{lem:dp_sample}) and the composition theorem, each iteration satisfies $(\tilde{\eps},\tilde{\delta})$-DP, where $\tilde{\eps} = \frac{b}{\ns} \cdot \Paren{ \eps^\prime+ \frac{2\sqrt{2\log\Paren{1.25/\delta^\prime}}}{\sigma}},\tilde{\delta} = \frac{2b\delta^\prime}{\ns}$. Finally, by the strong composition theorem (Lemma~\ref{lem:advcomp}), the algorithm satisfies $(\eps,\delta)$-DP, where
$$\eps  = T\tilde{\eps}(e^{\tilde{\eps}}-1)+\tilde{\eps} \sqrt{2T \cdot \log\frac{1}{T\tilde{\delta}}}, ~\delta = 2T\tilde{\delta}.$$
By the assumption that $ \frac{b}{\ns} \cdot \Paren{ \eps^\prime+ \frac{2\sqrt{2\log\Paren{1.25/\delta^\prime}}}{\sigma}} \le \frac{1}{\sqrt{T}}$, $\eps \le 2\tilde{\eps} \sqrt{2T\log\frac{1}{T\tilde{\delta}}}$.
\end{proof}

\paragraph{Remark I:} The assumption in the theorem is very weak. For example, if we assume the sample rate $\frac{b}{\ns}=\frac1{10000}$, 
each iteration's privacy cost is $\frac1{2000}$, the algorithm has to run for $400$ epochs to violate the assumption!

\paragraph{Remark II:} Except for the private selection, our privacy guarantee is roughly $\sqrt{\log \frac{T}{\delta}}$ worse than DP-SGD (Theorem 1 in~\cite{AbadiCGMMTZ16}). However, we do not think our algorithm has a worse privacy guarantee inherently. We observe that we are using the standard adaptive composition technique. An interesting open problem is how to better characterize the privacy cost, similarly to the R\'enyi privacy accountant, which we leave to  future work.

\section{Experiments}
\label{sec:exp}
In this section, we conduct experiments for the word embedding algorithm~\cite{MikolovCCD13}, where sparsity inherently exists in the gradients. First, we provide our implementation details, and then we present the performance for our sparse algorithm. We show that our sparse algorithm can achieve better utility at the comparable level of privacy.

\subsection{Model architecture}
The model we consider is the CBOW (Continuous Bag Of Words) version of Word2Vec~\cite{MikolovCCD13}, which is a popular model in the literature. However, training a CBOW model is extremely slow: all the model parameters have to be updated by every batch of the training samples. To accelerate the process and remove the waste of negligible update of all parameters, we modify the optimization objective with the technique of ``Negative Sampling"~\cite{MikolovSCCD13}, where only a small percentage of the parameters are updated in  each training iteration. Specifically, for a pair of target and context words, we randomly pick up a set of negative examples from the vocabulary, which we denote by~$N$. For each sample, which includes a target word $w_t$, a context word $w_c$, and a set of negative words $\{w_{n,i}\colon i \in U\}$, the loss function is defined as follows:
\[
\ell (w_t, w_c, N) = -\log(\sigma(e_t^T e_c)) - \sum_{i \in N} \log (\sigma(-e_t^T e_{n,i} )),
\]
where $e_t$, $e_c$, $e_{n,i}$ denote embeddings of $w_t$, $w_c$, $w_{n,i}$, respectively, and $\sigma$ denotes the sigmoid function.

We run our experiments on the Brown corpus~\cite{FrancisK79}\footnote{Apache-2.0 License.}. In the preprocessing step, we first remove the least frequent and stop words, reducing the vocabulary size to 1{,}000.  The embedding size is set to $100$ for each word. Therefore, the overall number of parameters in the model is $1\textrm{K} \times 100 = 100\textrm{K}$. We choose a window of size $4$, and set $|U|$ to be 8, which means that each sample contains one target word, one context word, and $8$ negative words. Finally, our data contains training, validation, and testing datasets with sizes 200K, 100K, 200K, respectively.

\subsection{Hyperparameter tuning}

Hyperparameter tuning for neural networks requires training several models with various combinations of hyperparameters, which results in privacy cost increase. For simplicity, we just assume our validation dataset is public in this experiment. In other words, no additional privacy cost is incurred by tuning hyperparameters on the validation dataset.

\subsection{Training process}

We implement our models with Opacus~\cite{Opacus}, a library for training differentially private PyTorch models. We set the batch size $b = 20$, and the clipping norm $S_1=15$. We train our models for $20$ epochs, by an Adam optimizer with learning rate $\eta = 0.001$. 
The experiment is run on a Linux server with 6 CPUs and 50GB RAM, and it takes roughly one day to complete. We did not use any GPUs in this experiment.

\subsection{Empirical results}

\begin{figure*}[bt!]
    \centering
    \begin{subfigure}[b]{0.3\textwidth}
        \centering
        \includegraphics[width=\textwidth]{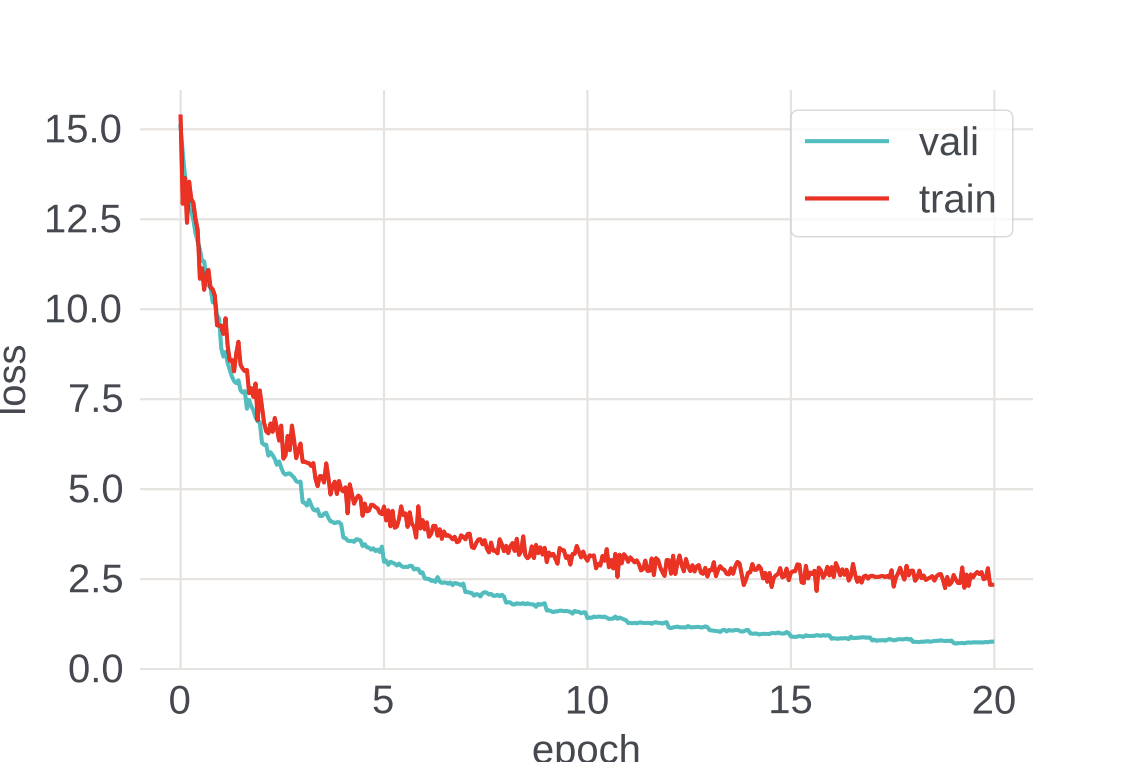}
        \caption{non-private}
    \end{subfigure}
    \hfill
    \begin{subfigure}[b]{0.3\textwidth}
        \centering
        \includegraphics[width=\textwidth]{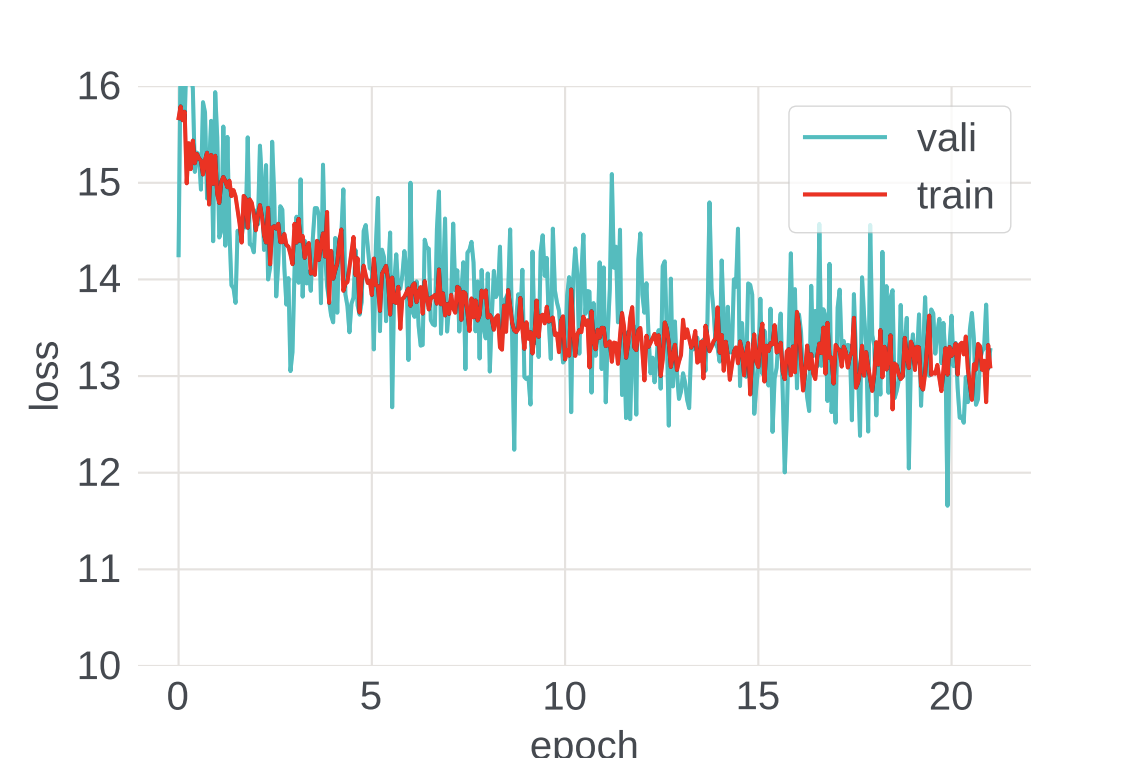}
        \caption{DP-SGD ($\sigma=0.32$)}
        \label{fig:utility_dp_sgd}
    \end{subfigure}
    \hfill
    \begin{subfigure}[b]{0.3\textwidth}
        \centering
        \includegraphics[width=\textwidth]{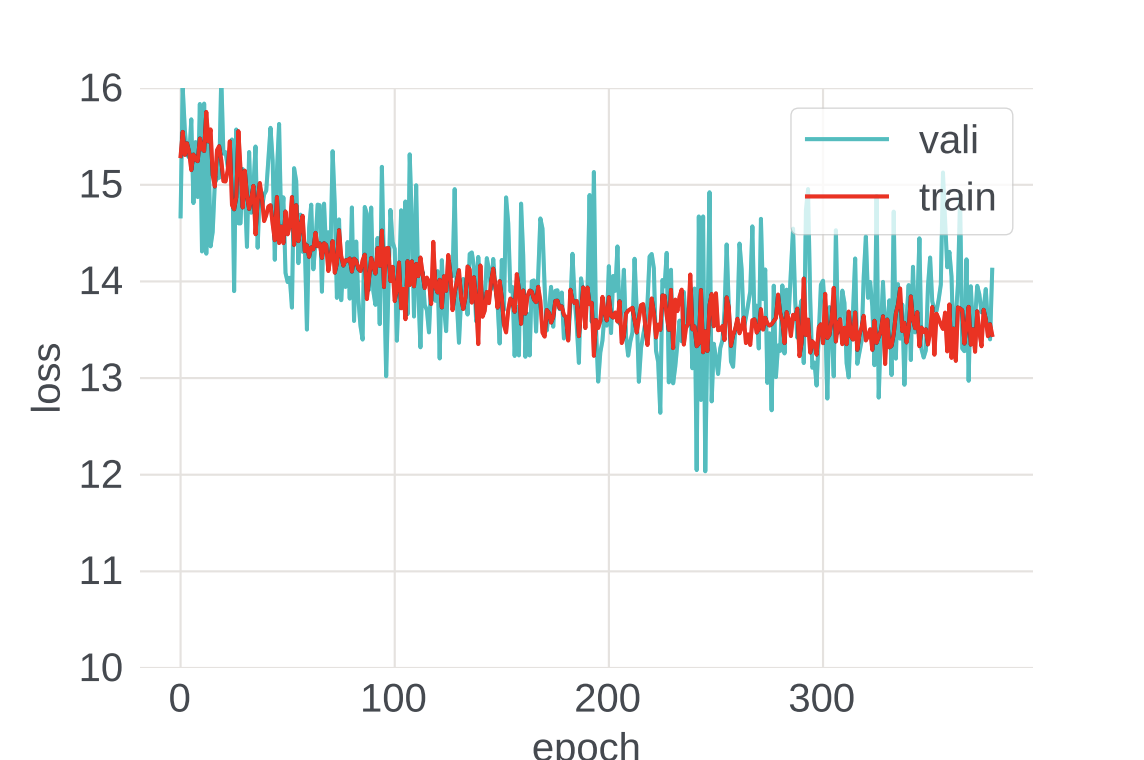}
        \caption{DP-SGD ($\sigma=0.5$)}
        \label{fig:utility_dp_sgd_0.5}
    \end{subfigure}
    \hfill
    \begin{subfigure}[b]{0.3\textwidth}
        \centering
        \includegraphics[width=\textwidth]{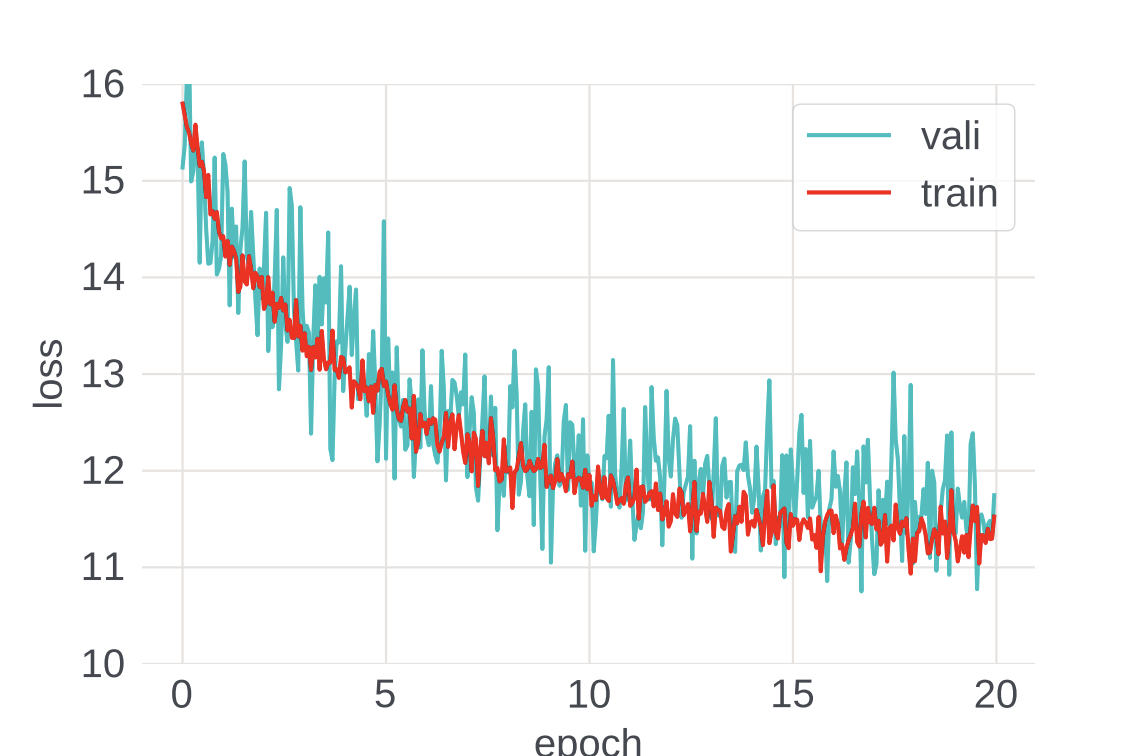}
        \caption{sparse-exponential}
        \label{fig:utility_exponential}
    \end{subfigure}
    \hfill
    \begin{subfigure}[b]{0.3\textwidth}
        \centering
        \includegraphics[width=\textwidth]{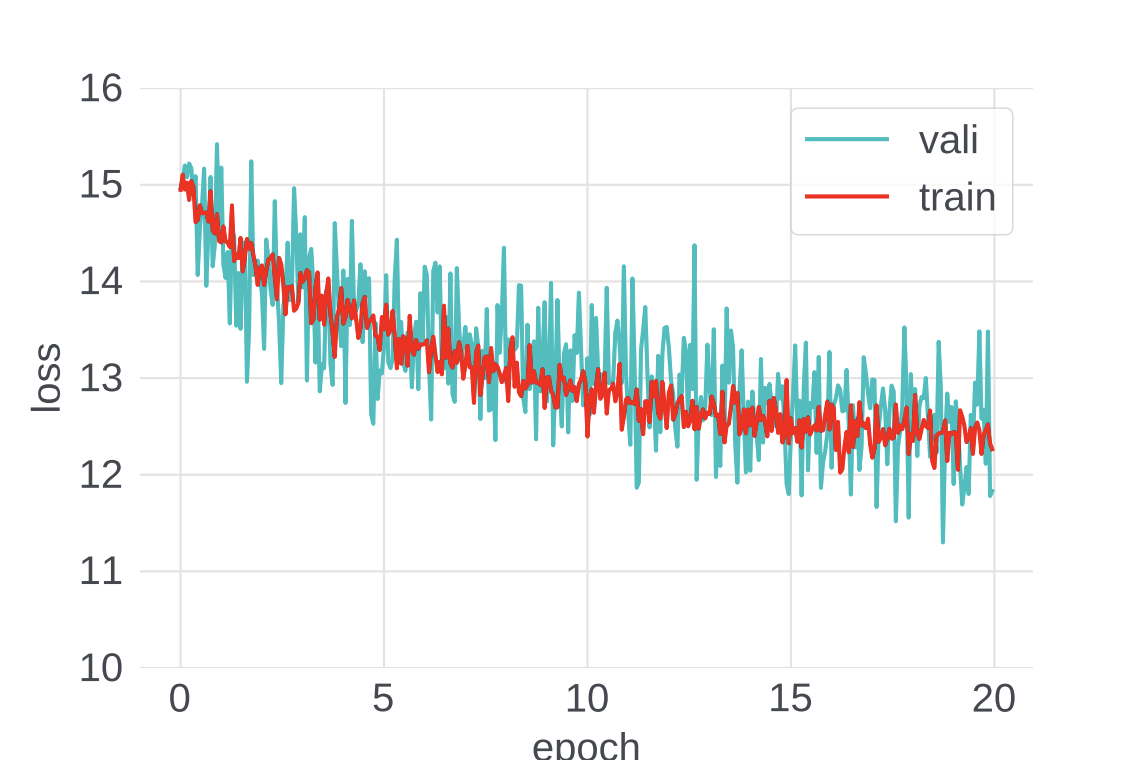}
        \caption{sparse-vector}
        \label{fig:utility_sparse}
    \end{subfigure}
    \hfill
    \begin{subfigure}[b]{0.3\textwidth}
        \centering
        \includegraphics[width=\textwidth]{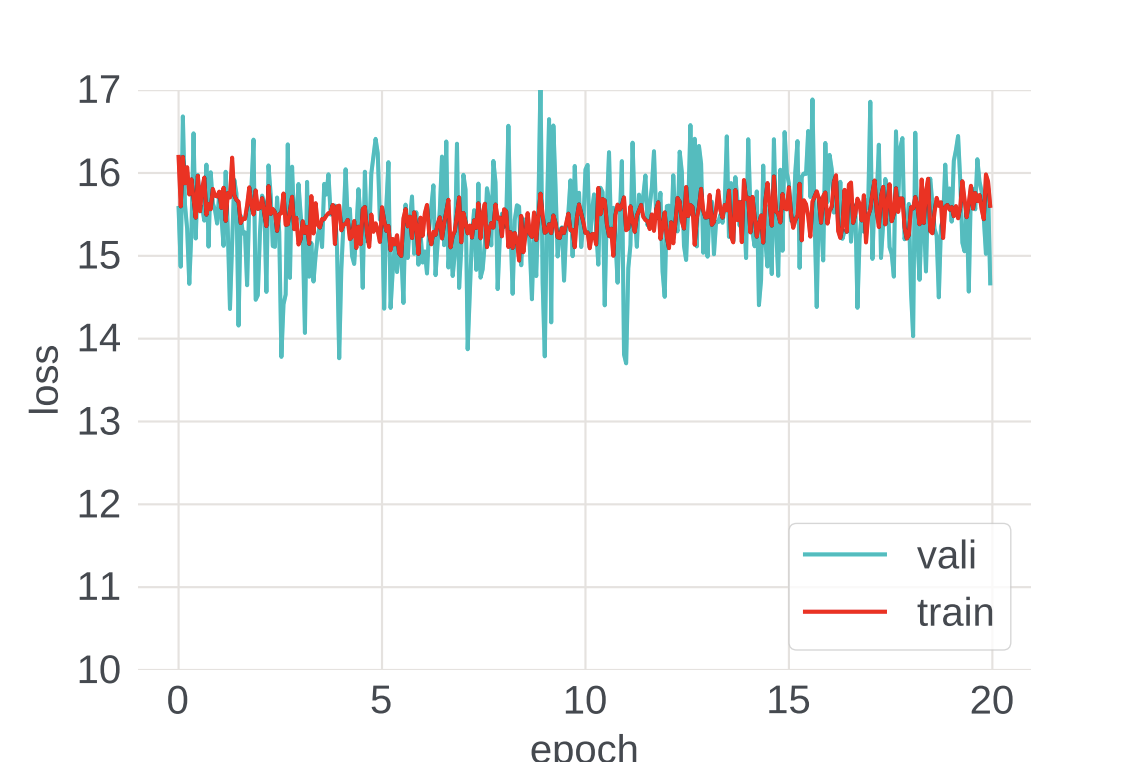}
        \caption{sparse-uniform}
        \label{fig:utility_uniform}
    \end{subfigure}
        \caption{Convergence rate comparison of different algorithms. Our sparse DP algorithms are used in (d), (e), and (f), with coordinated selected by exponential mechanism, sparse vector technique, and random sampling.}
        \label{fig:utility_20}
\end{figure*}

\begin{figure*}[htb!]
    \centering
    \begin{subfigure}[b]{0.2\textwidth}
        \centering
        \includegraphics[width=\textwidth]{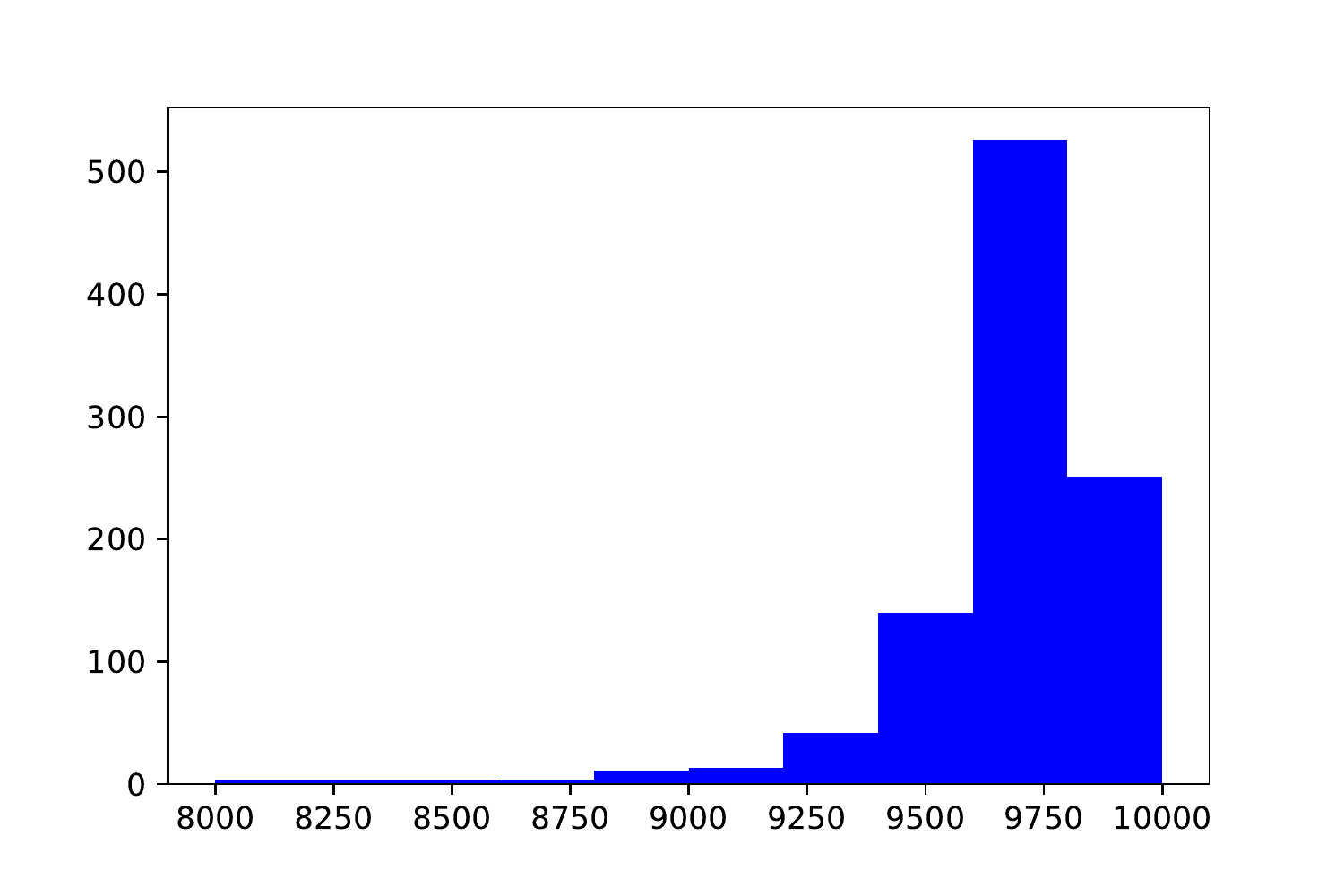}
        \label{}
    \end{subfigure}
    \hfill
    \begin{subfigure}[b]{0.2\textwidth}
        \centering
        \includegraphics[width=\textwidth]{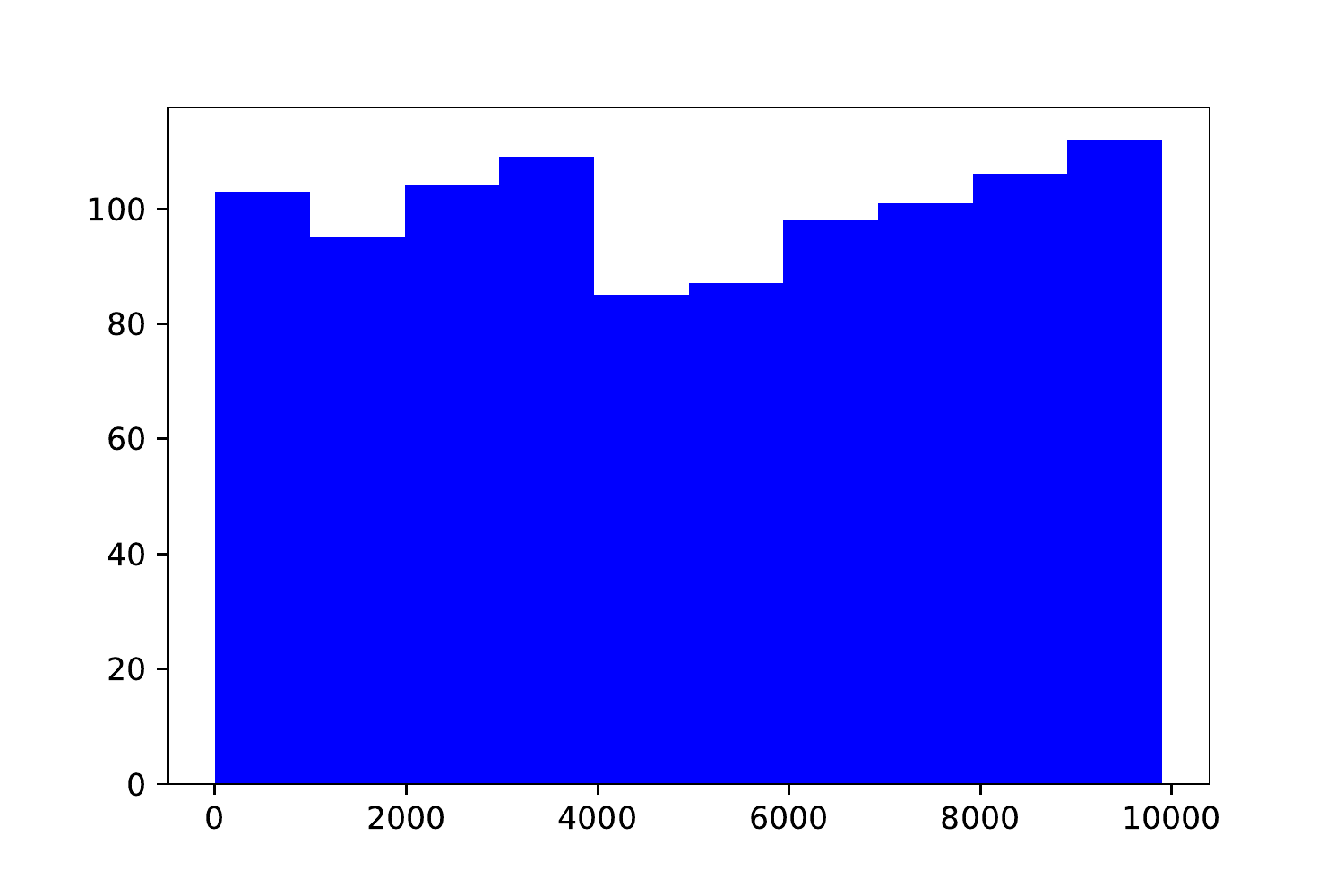}
        \label{}
    \end{subfigure}
    \hfill
        \begin{subfigure}[b]{0.2\textwidth}
        \centering
        \includegraphics[width=\textwidth]{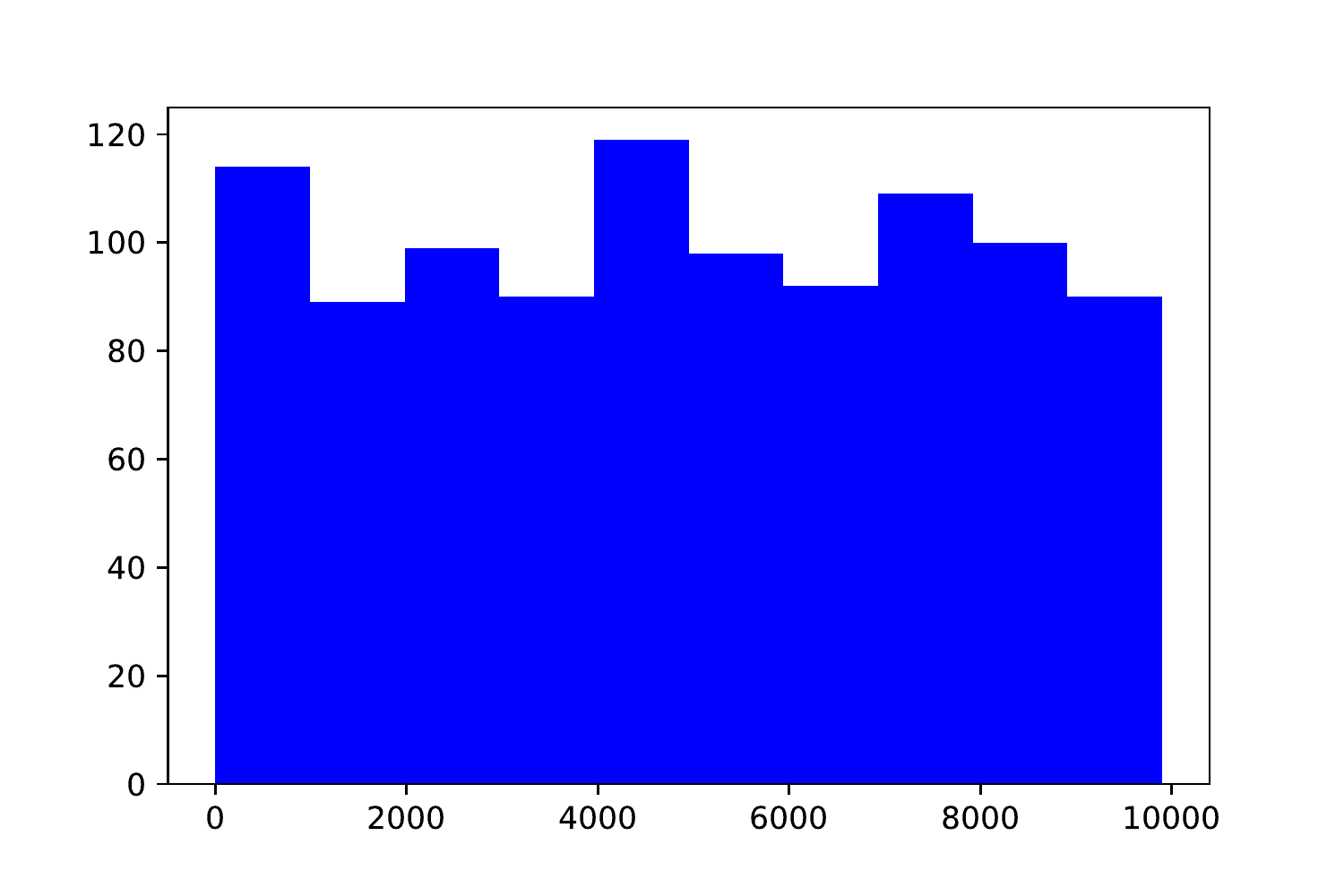}
        \label{}
    \end{subfigure}
    \hfill
    \begin{subfigure}[b]{0.2\textwidth}
        \centering
        \includegraphics[width=\textwidth]{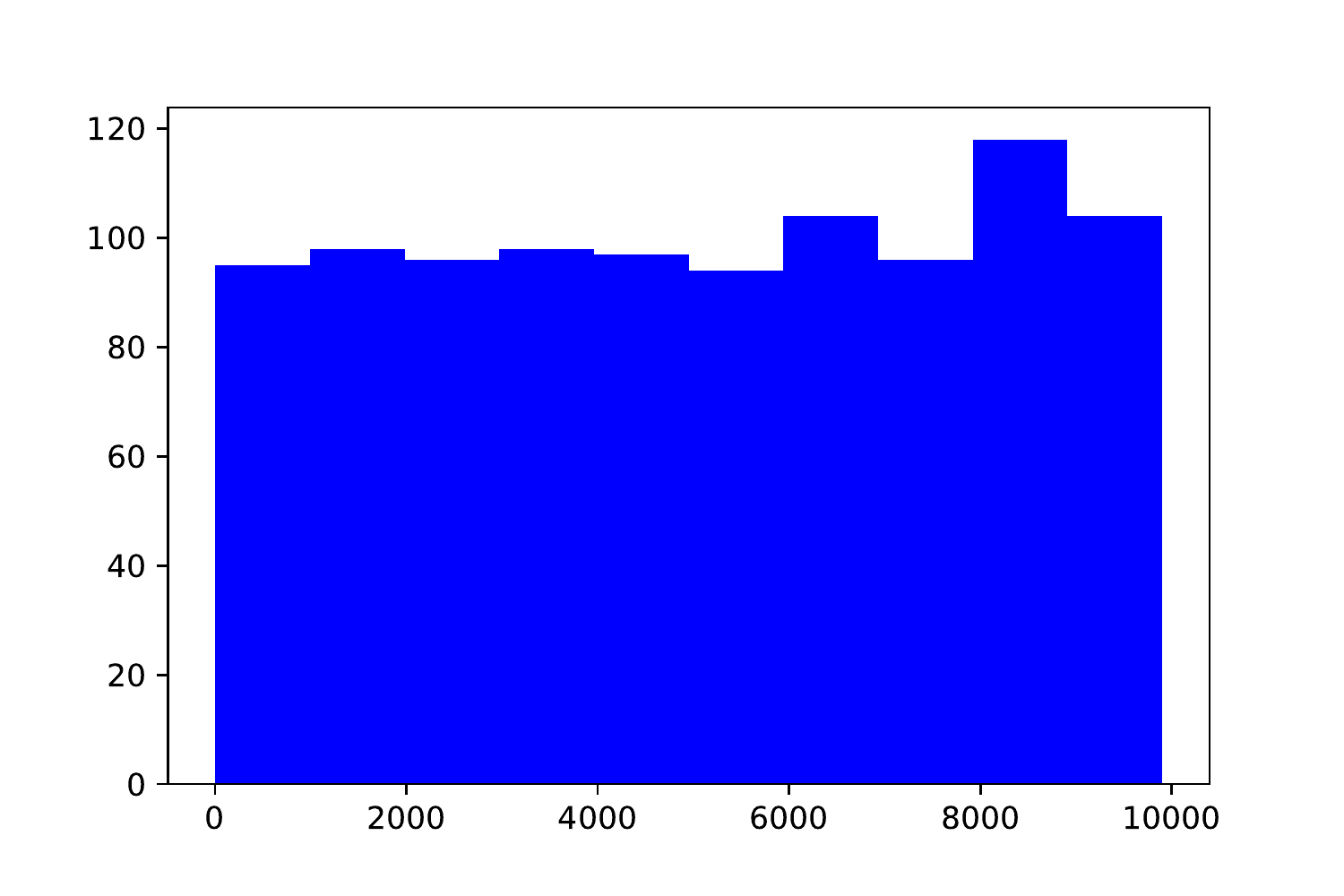}
        \label{fig:purely_private_1}
    \end{subfigure}
    \hfill    
    \begin{subfigure}[b]{0.2\textwidth}
        \centering
        \includegraphics[width=\textwidth]{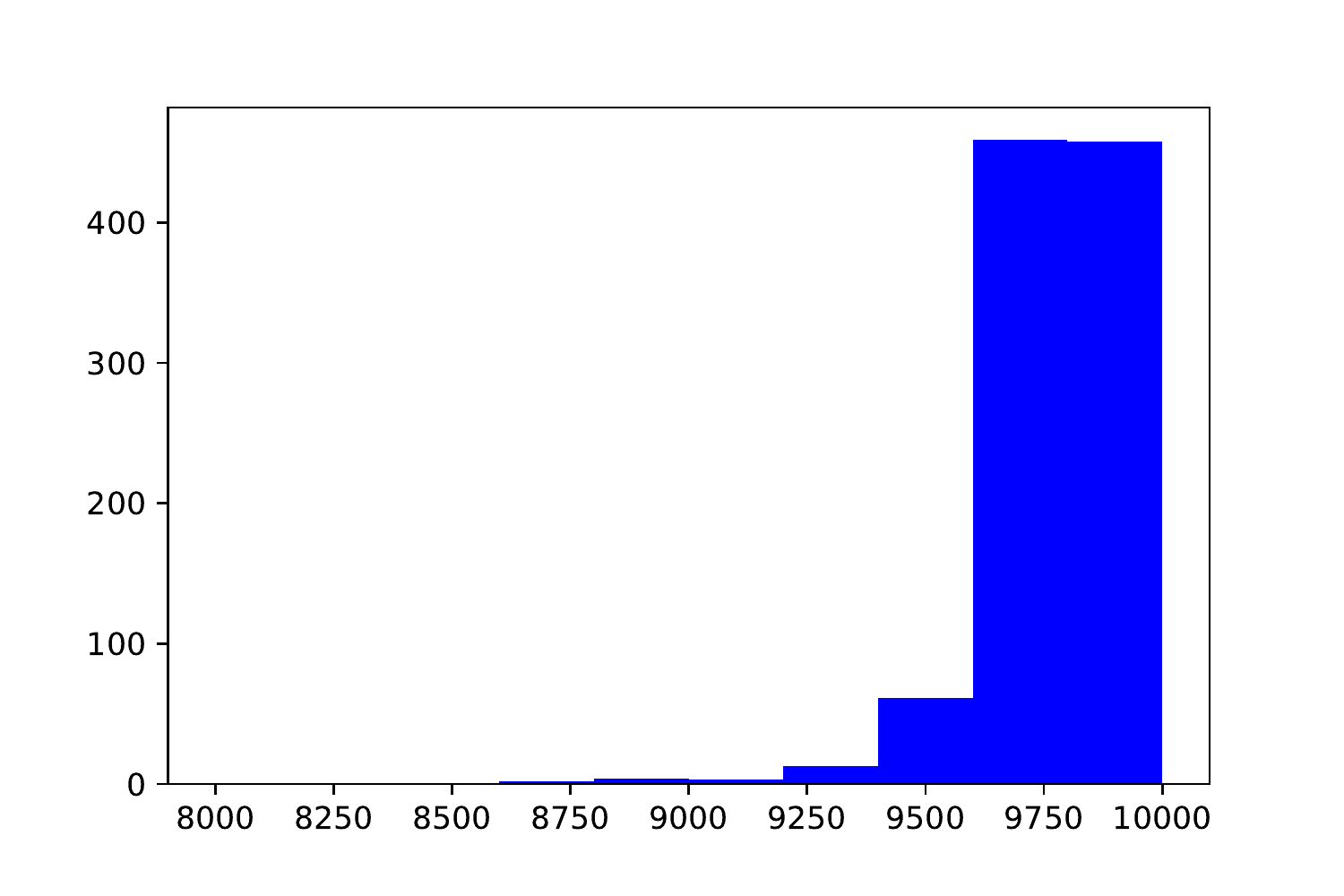}
        \label{}
    \end{subfigure}
    \hfill
    \begin{subfigure}[b]{0.2\textwidth}
        \centering
        \includegraphics[width=\textwidth]{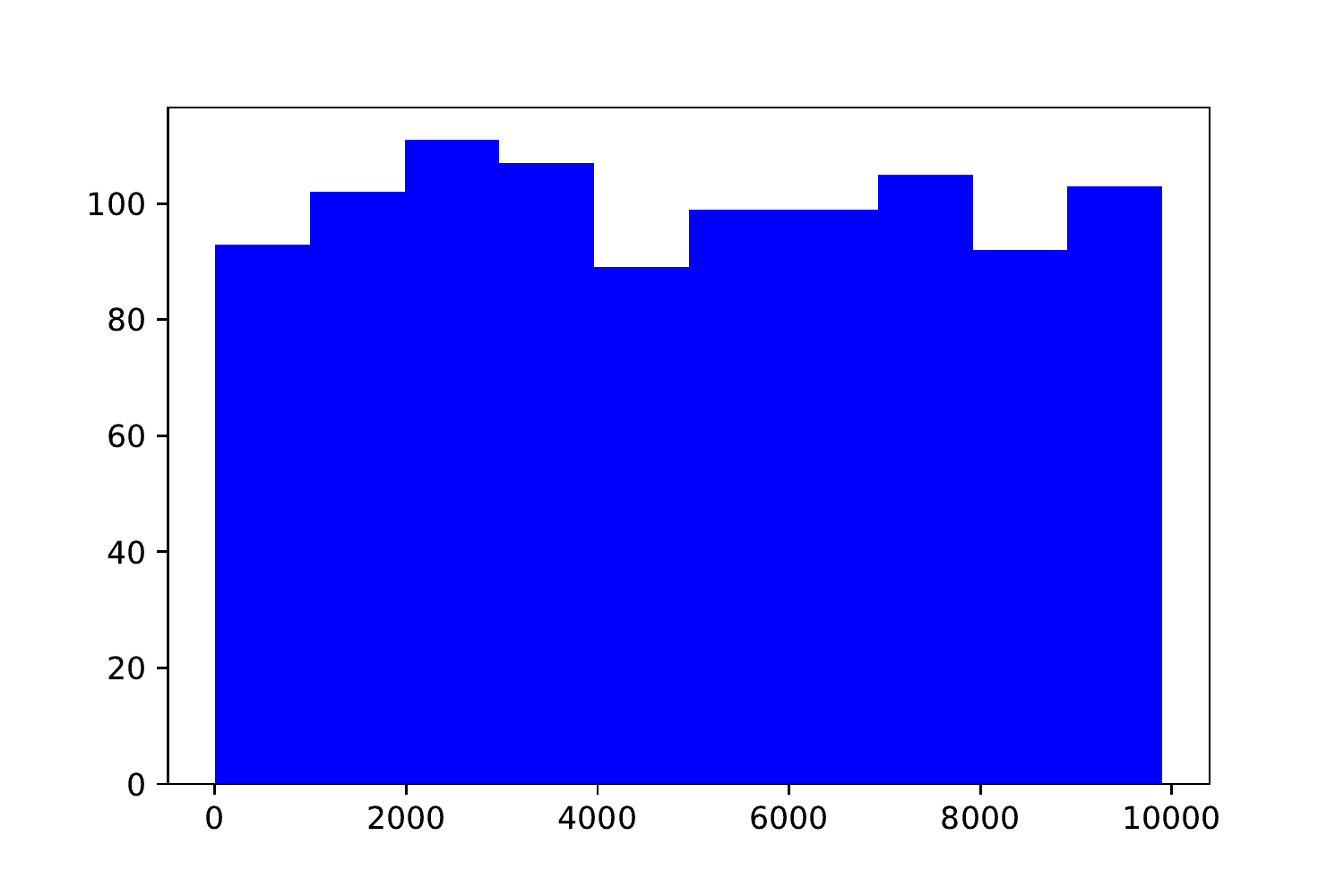}
        \label{fig:dpsgd}
    \end{subfigure}
    \hfill
        \begin{subfigure}[b]{0.2\textwidth}
        \centering
        \includegraphics[width=\textwidth]{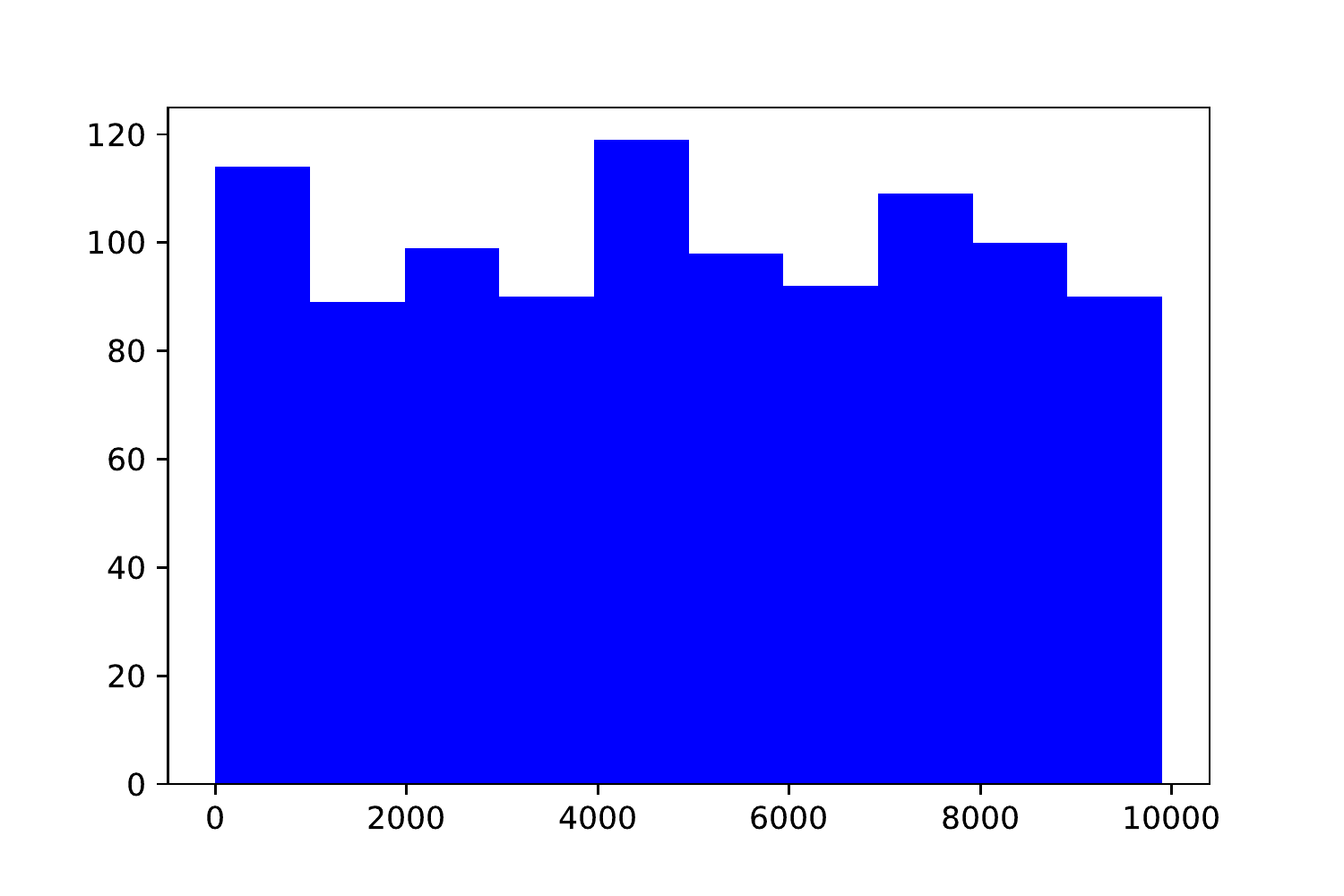}
        \label{fig:sparse}
    \end{subfigure}
    \hfill
    \begin{subfigure}[b]{0.2\textwidth}
        \centering
        \includegraphics[width=\textwidth]{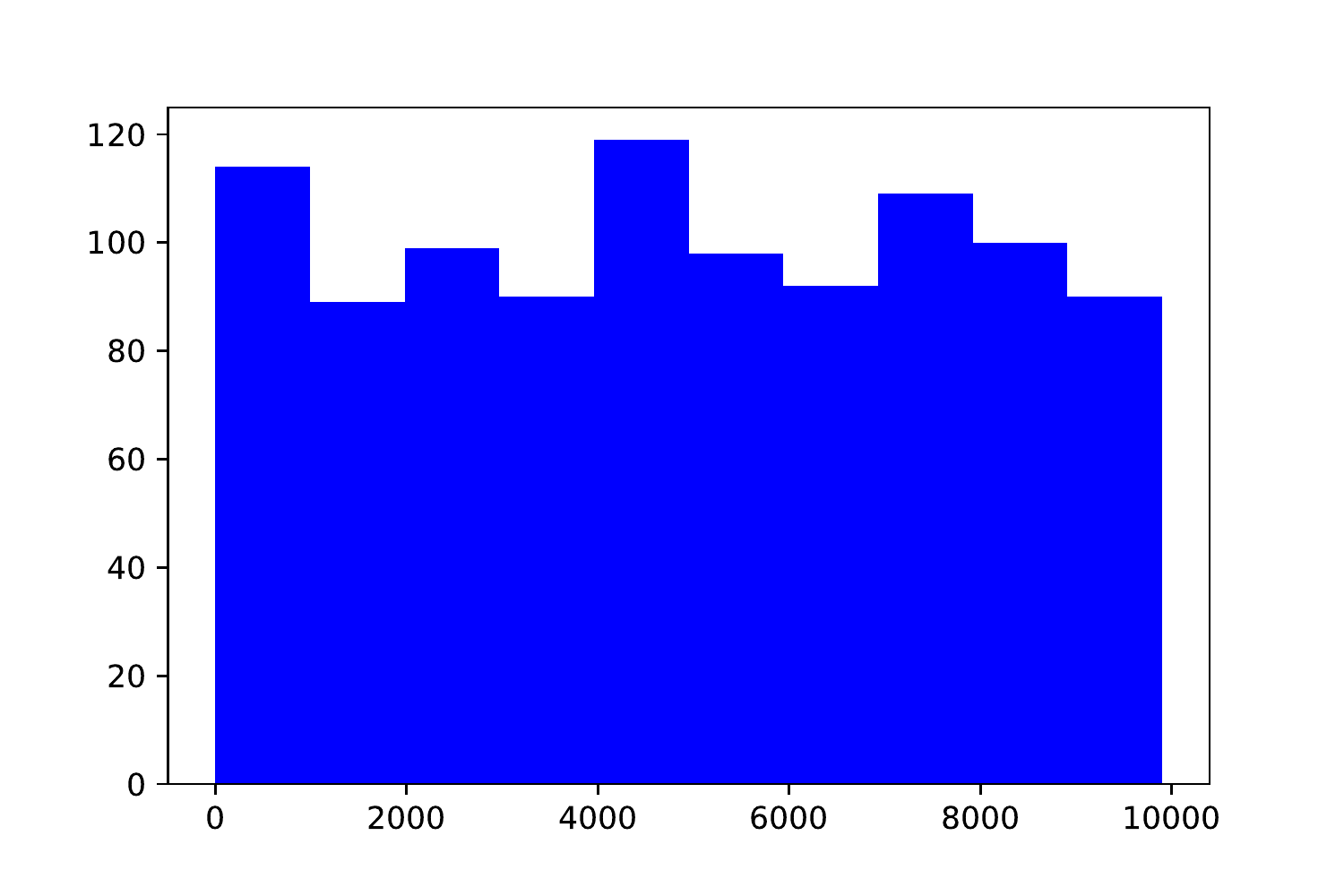}
        \label{fig:purely_private}
    \end{subfigure}
    \hfill
        \begin{subfigure}[b]{0.2\textwidth}
        \centering
        \includegraphics[width=\textwidth]{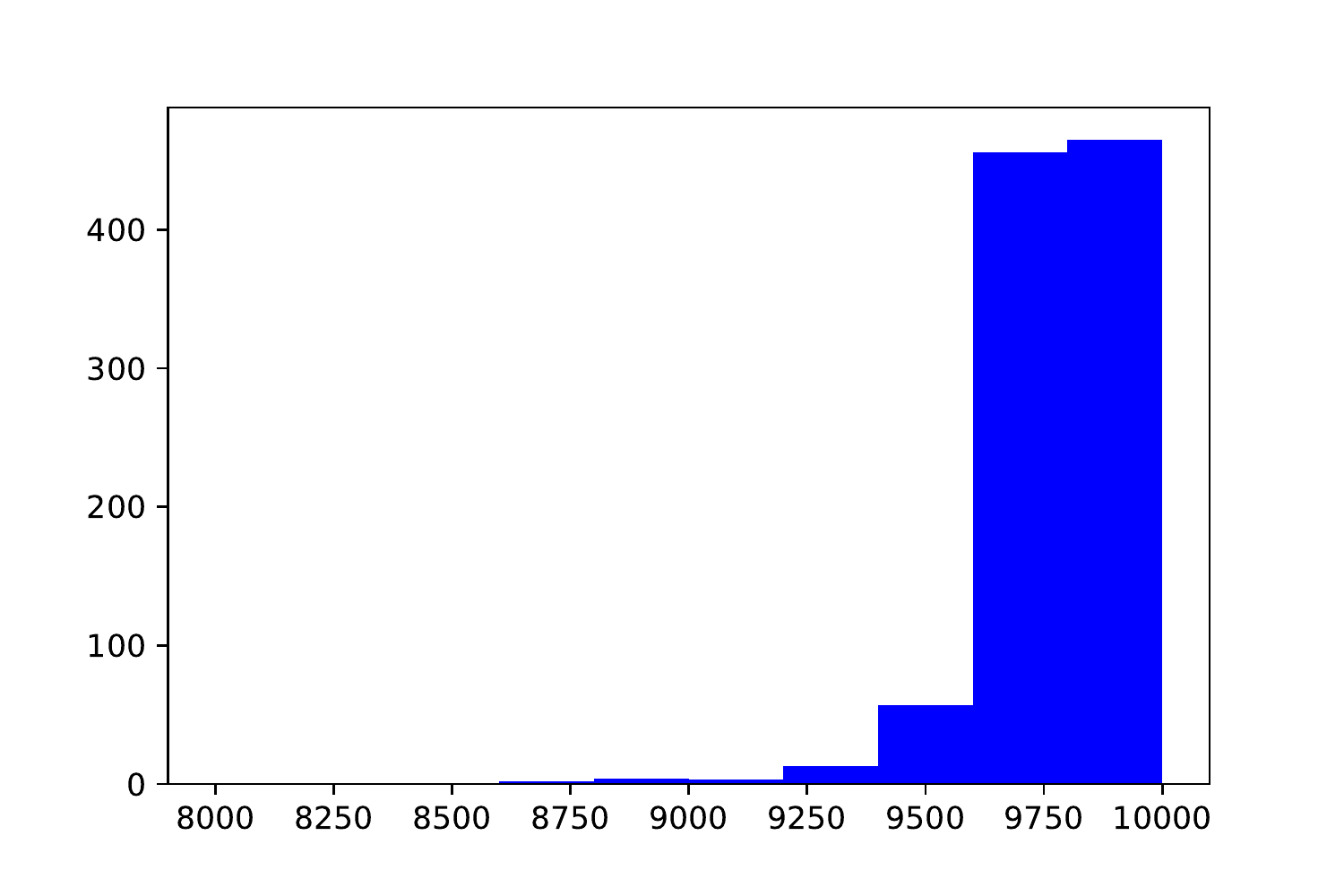}
        \caption{non-private}
        \label{}
    \end{subfigure}
    \hfill
    \begin{subfigure}[b]{0.2\textwidth}
        \centering
        \includegraphics[width=\textwidth]{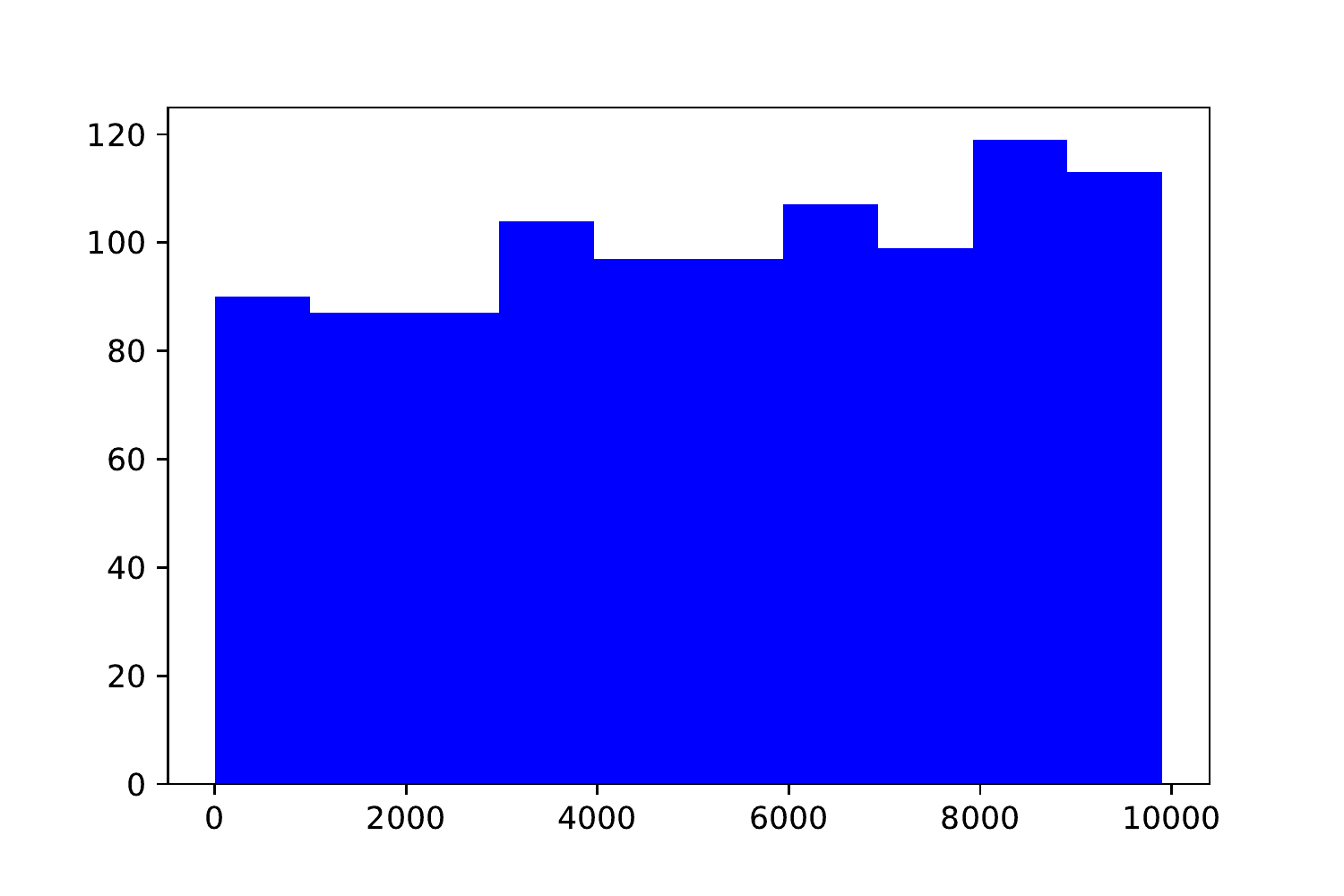}
        \caption{DP-SGD}
        \label{fig:dpsgd}
    \end{subfigure}
    \hfill
        \begin{subfigure}[b]{0.2\textwidth}
        \centering
        \includegraphics[width=\textwidth]{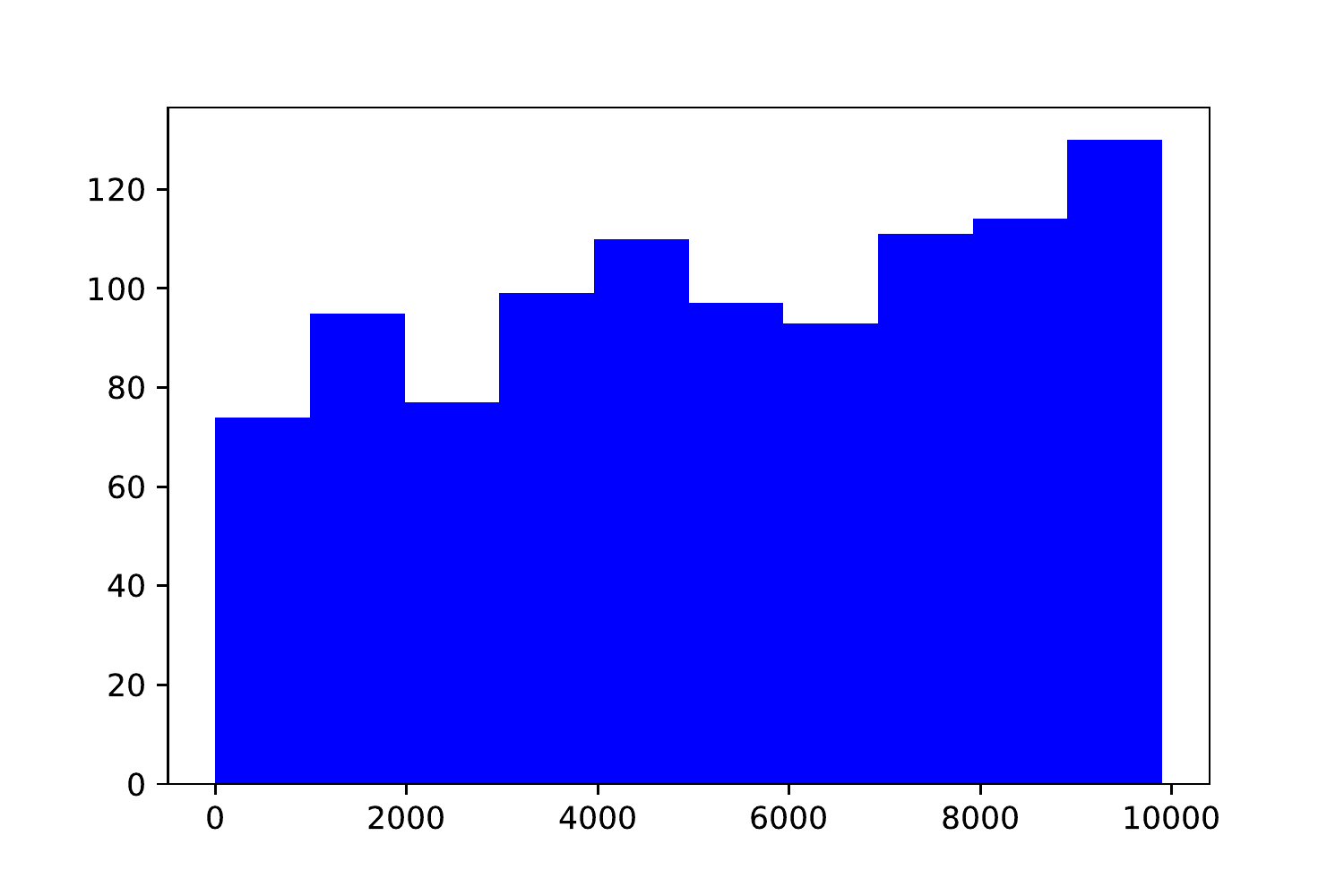}
        \caption{DP Sparse}
        \label{fig:sparse}
    \end{subfigure}
    \hfill
    \begin{subfigure}[b]{0.2\textwidth}
        \centering
        \includegraphics[width=\textwidth]{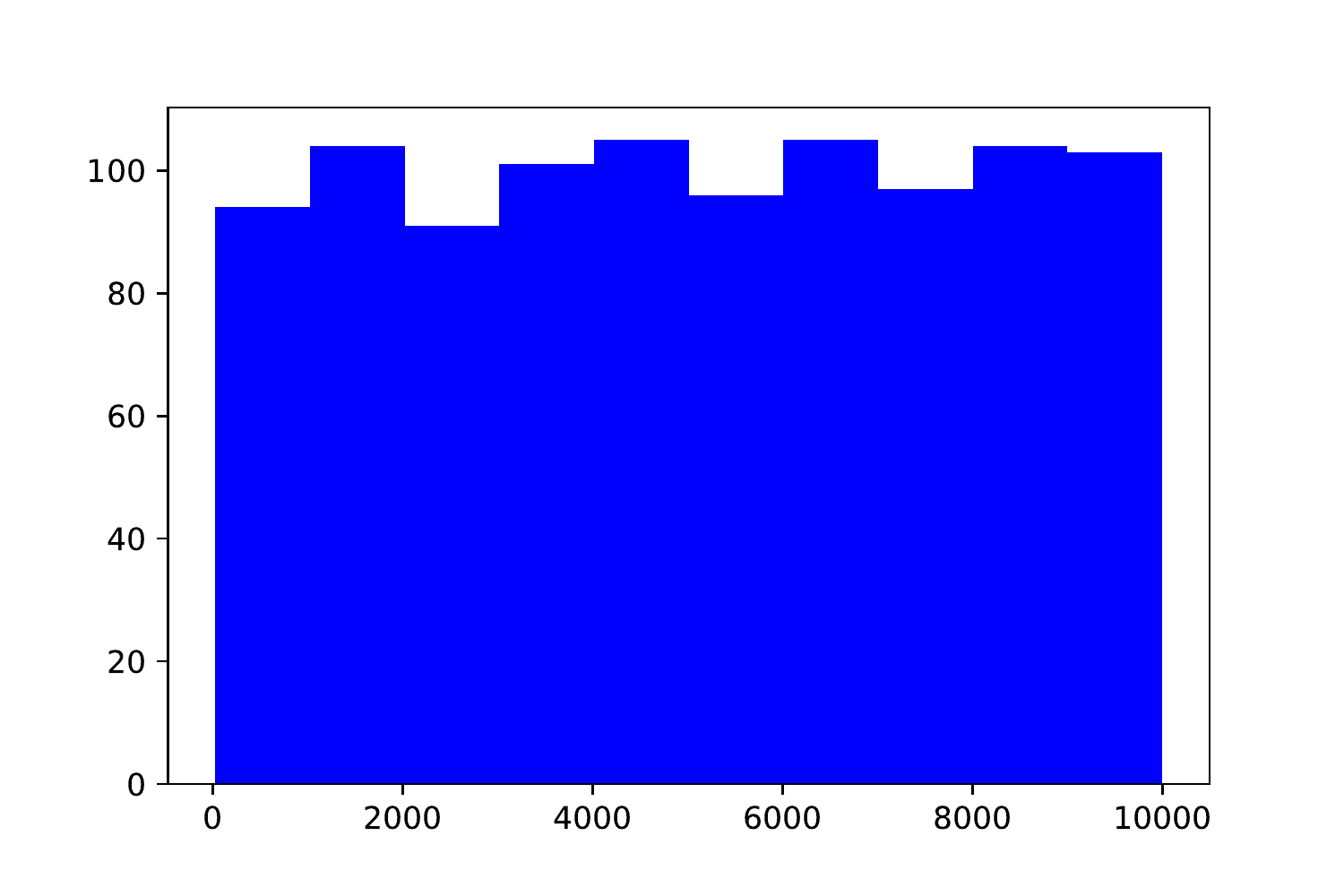}
        \caption{truly private}
        \label{fig:purely_private}
    \end{subfigure}
        \caption{Canary rank's distribution, when $n_c=3$ (top) , $n_c=9$ (middle), and $n_c=15$ (bottom)}
        \label{fig:canary}
\end{figure*}

We compare the performance of our sparse models with DP-SGD, after hyperparameter tuning. With respect to privacy, we fix the same privacy parameters $\eps=30$ and $\delta = 10^{-5}$ for all the algorithms. For the RDP accountant~\cite{RDP-Gaussian}, we choose the noise multiplier $\sigma = 0.32$ for DP-SGD. For the other sparse algorithms, we divide our privacy budget into two parts: $\eps=20$ and $\delta = 5 \times 10^{-6}$ for noise addition, $\eps=10$ and $\delta = 5 \times 10^{-6}$ for private selection. By Theorem~\ref{the:neural}, we set $\sigma = 0.5$ for our sparse algorithms. The hyperparameters for the algorithms are summarized in Table~\ref{table:hyper}.



\begin{table}[hbt!]
\centering
\begin{tabular}{ |c|c|c|c|c| } 
 \hline
&DP-SGD   & DP sparse  \\ 
\hline
Batch size $b$& 20 & 20 \\ 
\hline
Learning rate $\eta$& 0.001 & 0.001 \\ 
\hline
Epoch & 20 & 20 \\ 
\hline
First gradient clipping norm $S_1$& 15 &15 \\ 
\hline
First gradient clipping norm $S_1$& 15 &15 \\ 
\hline
Second gradient clipping norm $S_2$& N/A & 1  \\ 
\hline
Sparsity parameter $\gamma$ & N/A  & 0.001  \\ 
 \hline
DP selection clipping  norm $S_0$ & N/A  & 0.1  \\ 
 \hline
\end{tabular}
\caption{The summary of the hyperparameters }
\label{table:hyper}
\end{table}

From Figure~\ref{fig:utility_20}, we observe that our sparse algorithms have provided much better performance than DP-SGD, both in terms of the training error or test error. Furthermore, the private selection by the exponential mechanism slightly outperforms the sparse vector technique, which is consistent with the results in~\cite{LyuS17}.  

One valid complaint is that our algorithms give extremely weak privacy guarantees, since $\eps=30$ is too large to be practical for most applications. We remark that this value is an upper bound on the privacy loss budget.

As mentioned in Section~\ref{sec:nn}, our $\eps$ computation is quite conservative, and we believe that the true $\eps$ value should be much smaller. Here we justify our conjecture by an empirical method. Note that the gap between the training error and test error (generalization error) can serve as a lower bound on the privacy level, as shown in~\cite{dwork-transfer1}, and~\cite{jung-transfer2}.
From Figure~\ref{fig:utility_20}, we find that the generalization errors of both DP-SGD and our sparse algorithm are small that are consistent with good privacy guarantees. Furthermore, if we improve the noise multiplier of DP-SGD to the same level of our sparse algorithms ($\sigma=0.5$), we observe that the generalization errors are still comparable between our sparse algorithms (Figure~\ref{fig:utility_exponential} and Figure~\ref{fig:utility_sparse}) and DP-SGD (Figure~\ref{fig:utility_dp_sgd_0.5}), indicating they share similar privacy guarantees.
As a benchmark, non-private algorithm has provided the best training error and test error. However, the huge gap between the training and test error indicates the model has provided almost no privacy guarantees. Besides, the periodic behaviour in the training error also indicates the model has severely memorized the training dataset.

\subsection{Evaluation for unintended memorization}
In this section, we use another method to estimate the privacy level of our model. Specifically, we follow the Secret Sharer frameworks proposed from~\cite{CarliniLEKS19}, which aims to measure the unintended memorization of rarely-occurring phrases in the dataset. This method has been further explored in recent works~\cite{JayaramanE19, RamaswamyTMAMB20}.

First, we randomly generate $1{,}000$ canaries, each containing three words. The reason we opt for inserting three-word canaries is that computing the ranks for longer canaries is time-consuming. Each word in a canary is uniformly randomly chosen from the 1K vocabulary. This is because we want to measure \textit{unintended memorization} of our models, i.e., the memorization of atypical phrases in the language model, which is in fact orthogonal to our learning task. Two examples of our canaries are ``mother government opportunity'' and ``prices effort me''.

Next, we insert all these canaries into random positions in our original dataset, each canary appearing exactly $n_c$ times. Then we train our models as before. Note that the canaries have a trivial impact on our models, since the cumulative number of inserted phrases is relatively small relative to the size of the original dataset.

We use the Random Sampling method, as proposed in~\cite{CarliniLEKS19}, to measure whether the canary is memorized by our model. Specifically, for a canary $c = \{c_0, c_1, c_2\}$, we define the log-perplexity of the model $\theta$ on $c$ as $ P_{\theta} (c) = -\log \probof{c_1|c_0} -\log \probof{c_2|c_1, c_0}$. We define the rank of the canary as $\mathit{rank}_{\theta}(c) = \absv{ \{ c_1^\prime \neq c_2^\prime\colon P_{\theta} ( \{c_0, c_1^\prime, c_2^\prime\}) \ge P_{\theta} (c)\}, c_1^\prime, c_2^\prime \in V}$, where $V$ is the vocabulary. Intuitively, a high rank indicates the model highly favors the canary as compared to random chance. In other words, the model has ``memorized'' the canary, suggesting a privacy violation. We note that when computing each canary's rank, it is time-consuming to enumerate all the possible phrases. Therefore, we randomly pick up 10K phrases from the domain and compute $c$'s rank in the subset instead.

In Figure~\ref{fig:canary}, we show the distributions of the rank when $n_c=3$, $n_c=9$ and $n_c=15$. We run the experiments as mentioned above with non-private algorithm, DP-SGD, and Algorithm~\ref{alg:privatenn}  (instantiated with the exponential mechanism for private selection), where we use the same hyperparameters as in the previous experiments. Note that for DP-SGD and DP Sparse, we are using exactly the same noise parameter $\sigma = 0.5$. The experiment is designed to validate the hypothesis that the trained models exhibit similar levels of unintended memorization.

As a benchmark, we randomly select $1{,}000$ phrases that are outside of the training dataset, and we plot the histogram of the rank in Figure~\ref{fig:purely_private}. We find that it is very close to the uniform distribution (confirmed by the chi-squared goodness of fit test). This is indeed expected since the process is equivalent to uniformly randomly drawing one sample from an ordered set, since the phrase is independent of the trained model. 

The top two rows of Figure~\ref{fig:canary} report results for small $n_c$ ($n_c=3$ and 9, respectively). Unlike the non-private training, both DP-SGD and DP Sparse result in histograms close to the uniform distribution (Figures \ref{fig:dpsgd} and~\ref{fig:sparse}). 

We also compute the chi-squared distance, and the $p$-values of Pearson's chi-squared tests (Table~\ref{table:chi-distance}). Where the $p$-values are not statistically significant, the chi-squared test fails to reject the null hypothesis, i.e., that the training procedure preserves privacy.
Visually, the non-private algorithm produces a histogram that is highly concentrated to the right, which indicates that the model has indeed memorized the training dataset. Therefore, we argue that our sparse algorithm gives comparable privacy guarantees as DP-SGD, which are much better than the non-private version.

Finally, this method can be part of hyperparameter tuning, where the chi-squared distance can be an excellent metric to measure privacy leakage.  For the case of $n_c=15$, we observe that Figure~\ref{fig:canary} is consistent with the empirical $\eps$ being quite small (ranging between $0.07$ and $0.11$), following the group property of differential privacy, and the fact that the uniform distribution breaks down at some place between $n_c=9$ and $n_c=15$.

\begin{table}[hbt!]
\centering
\begin{tabular}{ |c|c|c|c|c| } 
 \hline
$n_c$ & non-private & DP-SGD & DP Sparse & random \\ 
\hline
$3$ & \textbf{2.62 (.00)} & .007 (.63) & .005 (.85)& .005 (.86)\\ 
\hline
$9$ & \textbf{3.24 (.00)} & .004 (.88) & .007 (.68)& .010 (.33) \\ 
\hline
$15$ & \textbf{3.27 (.00)} & .011 (.31) & \textbf{.026 (.00)} & .002 (.98)  \\ 
 \hline
\end{tabular}
\caption{The chi-squared goodness of fit with the uniform distribution ($p$-value in parenthesis). Statistically significant results ($p<0.01$) are in bold.}
\label{table:chi-distance}
\end{table}

%

 \section{Acknowledgements}

The authors thank Milan Shen and Will Bullock for helpful suggestions and support for this work.

\bibliographystyle{alpha}
\bibliography{biblio}


\appendix
\section{Proof of Lemma~\ref{lem:theer}}
\label{app:proof}

The first half comes from~\cite{AgarwalSYKM18}. Therefore, it is enough to prove the second half, where we use a similar proof technique with~\cite{AgarwalSYKM18} and~\cite{ghadimi2013stochastic}. 

First observe that, for any $t=1,\ldots, T$, 
\begin{align*}
&~~~\norm{w_{t+1}-w^*}^2 = \norm{w_t - \eta \Delta_t -w^* }^2\\
&= \norm{w_t  -w^* }^2 -2\eta \langle \Delta_t, w_t-w^* \rangle+ \eta^2 \norm{ \Delta_t}^2\\
& = \norm{w_t  -w^* }^2 -2\eta \langle \nabla L(w_t;D )+A_t, w_t-w^* \rangle+ \eta^2\Paren{ \norm{\nabla L(w_t;D )}^2 + 2\langle  \nabla L(w_t;D) , A_t \rangle+\norm{A_t}^2},
\end{align*}
where we define $A_t = \Delta_t - \nabla L(w_t;D)$.

By the convexity and smoothness, we have
\begin{equation}
\norm{\nabla L(w_t;D )}^2 \le K \langle   \nabla L(w_t;D), w_t -w^* \rangle.\nonumber
\end{equation}

Combining these, for all $t \in {1,\ldots,T}$,
\begin{align*}
&~~~\norm{w_{t+1}-w^*}^2 \\
& \le \norm{w_t  -w^* }^2 -(2\eta - K\eta^2) \langle \nabla L(w_t;D), w_t -w^* \rangle - 2\eta \langle w_t - \eta \nabla L(w_t;D)-w^*, A_t  \rangle+\eta^2 \norm{A_t}^2\\
& \le \norm{w_t  -w^* }^2 -(2\eta - K\eta^2) \Paren{ L(w_t;D) -  L(w^*;D)} - 2\eta \langle w_t - \eta \nabla L(w_t;D)-w^*, A_t  \rangle+\eta^2 \norm{A_t}^2,
\end{align*}
where the last inequality uses the convexity and the fact  that $\eta\le \frac{2}{K}$.

Summing up the above inequalities and re-arranging the terms, we have 
\begin{align*}
&~~~~\Paren{2\eta - K\eta^2}\sum_{t=1}^{T} \Paren{L(w_t;D)- L(w^*;D)} \\
&\le \norm{w_1-w^*}^2 -  \norm{w_{T+1}-w^*}^2 -2\eta\sum_{t=1}^{T} \langle w_t - \eta \nabla L(w_t;D)-w^*, A_t  \rangle + \sum_{t=1}^{T}\eta^2\norm{A_t}^2\\
&\le D_w^2 -2\eta\sum_{t=1}^{T} \langle w_t - \eta \nabla L(w_t;D)-w^*, A_t  \rangle + \sum_{t=1}^{T}\eta^2\norm{A_t}^2.
\end{align*}

Taking the expectation on both sides, we have 

\begin{align*}
&~~~\expectation{\frac1T\sum_{t=1}^{T} \Paren{L(w_t;D)- L(w^*;D)}} \\
&\le \frac1{\Paren{2\eta - K\eta^2}T} \Paren{ D_w^2 - 2\eta \cdot \expectation{\sum_{t=1}^{T} \langle w_t - \eta \nabla L(w_t;D)-w^*, A_t  \rangle}+ \expectation{ \sum_{t=1}^{T}\eta^2\norm{A_t}^2}}.
\end{align*}
We first bound the third term in the parenthesis. According to the definition of $\sigma$ in Lemma~\ref{lem:theer}, we have
\begin{equation}
\label{equ:thi}
\expectation{ \sum_{t=1}^{T}\eta^2\norm{A_t}^2} \le T\eta^2\sigma^2.
\end{equation}

With respect to the second term,
\begin{align*}
&~~~~\expectation{\sum_{t=1}^{T} \langle w_t - \eta \nabla L(w_t;D)-w^*, A_t  \rangle} \nonumber\\
&= \expectation{\sum_{t=1}^{T} \langle w_t - \eta \nabla L(w_t;D)-w^*,  (\Delta_t - \nabla_t) +(\nabla_t - \nabla L(w_t;D) )\rangle}\nonumber\\
&= \expectation{\sum_{t=1}^{T} \langle w_t - \eta \nabla L(w_t;D)-w^*,  \Delta_t - \nabla_t \rangle}+ \sum_{t=1}^{T}  \expectation{ \expectation{ \langle w_t - \eta \nabla L(w_t;D)-w^*,  \nabla_t - \nabla L(w_t;D) \rangle \mid w_t = w}}.\nonumber\\
\end{align*}
Note that 
\begin{align*}
&~~~~\expectation{ \langle w_t - \eta \nabla L(w_t;D)-w^*,  \nabla_t - \nabla L(w_t;D) \rangle \mid w_t = w}\\
& =  \langle w - \eta \nabla L(w;D)-w^*,  \expectation{ \nabla_t - \nabla L(w_t;D)  \mid w_t = w} \rangle = 0.
\end{align*}
where the last equality comes from the fact that $\nabla_t$ is an unbiased estimator of $ \nabla L(w_t;D)$.

Therefore,
\begin{equation}
\expectation{\sum_{t=1}^{T} \langle w_t - \eta \nabla L(w_t;D)-w^*, A_t  \rangle} = \expectation{\sum_{t=1}^{T} \langle w_t - \eta \nabla L(w_t;D)-w^*,  \Delta_t - \nabla_t \rangle}.\nonumber
\end{equation}
By Cauchy-Schwarz inequality, and the definitions in Lemma~\ref{lem:theer},
\begin{align}
\label{equ:sec}
&~~~\expectation{\sum_{t=1}^{T} \langle w_t - \eta \nabla L(w_t;D)-w^*,  \Delta_t - \nabla_t \rangle}\nonumber\\
&\le \expectation{ \sum_{t=1}^T \norm{w_t - \eta \nabla L(w_t;D)-w^*} \cdot \norm{\Delta_t - \nabla_t}}\nonumber\\
&\le T\Paren{D_w B+ \eta GB }.
\end{align}

Finally, by combining Equation~\eqref{equ:thi} and~\eqref{equ:sec}, and note that $K\eta\le 1$,
\begin{align*}
&~~~\expectation{\frac1T\sum_{t=1}^{T} \Paren{L(w_t;D)- L(w^*;D)}} \\
&\le \frac{D_w^2+ T\eta^2\sigma^2}{T\eta \cdot (2-K\eta)}+\frac{2 D_w B \eta+ 2\eta^2GB} {\eta \cdot (2-K\eta)}\\
&\le \frac{D_w^2+ T\eta^2\sigma^2}{T\eta}+\frac{2 D_w B \eta+ 2\eta^2GB} {\eta},
\end{align*}

By taking $\eta = \min \Paren{\frac1{K}, \frac{D_w}{\sqrt{T}\sigma}}$, we have $\frac{D_w^2}{T\eta} \le \frac{D_w^2 K}{T}+\frac{D_w\sigma}{\sqrt{T}}$, $\eta\sigma^2 \le \frac{D_w\sigma}{\sqrt{T}} $, and $G B \eta  \le \frac{GBD_w}{\sigma\sqrt{T}}$.
Therefore, by combining them,
\begin{align*}
&~~~\expectation{\frac1T\sum_{t=1}^{T} \Paren{L(w_t;D)- L(w^*;D)}} \le \frac{D_w^2 K}{T} + \frac{D_w \sigma}{\sqrt{T}}+ 2B D_w\Paren{1 + \frac{G}{\sigma \sqrt{T}}}.
\end{align*}

\end{document}